\newcommand{\Ber}[1]{\!{Ber}\tp{#1}}
\newcommand{\arm}{\!{arm}\xspace}
\begin{document}
\title{Tight Gap-Dependent Memory-Regret Trade-Off for Single-Pass Streaming Stochastic Multi-Armed Bandits}
%
%
\author{Zichun Ye \\ Shanghai Jiao Tong University \\ \textsf{alchemist@sjtu.edu.cn}
\and Chihao Zhang  \\ Shanghai Jiao Tong University \\ \textsf{chihao@sjtu.edu.cn}
\and Jiahao Zhao \\ The University of Hong Kong \\ \textsf{zjiahao@connect.hku.hk}
}


%
\maketitle              
\begin{abstract}
We study the problem of minimizing gap-dependent regret for single-pass streaming stochastic multi-armed bandits (MAB). In this problem, the $n$ arms are present in a stream, and at most $m<n$ arms and their statistics can be stored in the memory. We establish tight \emph{non-asymptotic} regret bounds regarding all relevant parameters, including the number of arms $n$, the memory size $m$, the number of rounds $T$ and $(\Delta_i)_{i\in [n]}$ where $\Delta_i$ is the reward mean gap between the best arm and the $i$-th arm. These gaps are \emph{not} known in advance by the player. Specifically, for any constant $\alpha \ge 1$, we present two algorithms:  one applicable for $m\ge \frac{2}{3}n$ with regret at most $O_\alpha\Big(\frac{(n-m)T^{\frac{1}{\alpha + 1}}}{n^{1 + {\frac{1}{\alpha + 1}}}}\displaystyle\sum_{i:\Delta_i > 0}\Delta_i^{1 - 2\alpha}\Big)$\footnote{In this paper, the notations $O_\alpha, \Omega_\alpha, \Theta_\alpha$ subsume a multiplicative factor depending only on $\alpha$. This is fine since we usually take $\alpha$ to be a constant.} and another applicable for $m<\frac{2}{3}n$ with regret at most $O_\alpha\Big(\frac{T^{\frac{1}{\alpha+1}}}{m^{\frac{1}{\alpha+1}}}\displaystyle\sum_{i:\Delta_i > 0}\Delta_i^{1 - 2\alpha}\Big)$. We also prove matching lower bounds for both cases by showing that for any constant $\alpha\ge 1$ and any $m\leq k < n$, there exists a set of hard instances on which the regret of any algorithm is $\Omega_\alpha\Big(\frac{(k-m+1) T^{\frac{1}{\alpha+1}}}{k^{1 + \frac{1}{\alpha+1}}} \sum_{i:\Delta_i > 0}\Delta_i^{1-2\alpha}\Big)$. This is the first tight gap-dependent regret bound for streaming MAB. Prior to our work, an $O\Big(\sum_{i\colon\Delta>0} \frac{\sqrt{T}\log T}{\Delta_i}\Big)$ upper bound for the special case of $\alpha=1$ and $m=O(1)$ was established by  Agarwal, Khanna and Patil (COLT'22). In contrast, our results provide the correct order of regret as $\Theta\Big(\frac{1}{\sqrt{m}}\sum_{i\colon\Delta>0}\frac{\sqrt{T}}{\Delta_i}\Big)$.

\end{abstract}

\newpage

\setcounter{tocdepth}{1}
\tableofcontents

\section{Introduction}

The stochastic multi-armed bandits (MAB) is a popular $T$-round game that has been widely studied in online learning. In the game, one player faces $n$ arms. In each round $t \in [T]$, the player chooses an arm $A_t$ among the $n$ arms and gets a reward drawn from a predetermined reward distribution with mean $\mu_{A_t} \in [0,1]$. The arm with the largest reward mean $\mu_*$ is called the best arm. The total expected regret is defined as
$
    \E{R(T)} = \E{\sum_{t=1}^T \mu_* - \mu_{A_t}}, 
$
where the expectation is over the randomness from the player's strategy, and the aim is to minimize $\E{R(T)}$.

The classic MAB problem defined above has been thoroughly studied. It is known that the expected minimax regret, namely the regret of the best algorithm against the worst input, is $\Theta\tp{\sqrt{n T}}$ via the \emph{Upper Confidence Bounds} (UCB) algorithm and its variants (see e.g.~\cite{AB09,Aue02,GC11}). As for the \emph{gap-dependent} regret, the mean gap $\Delta_i\defeq \mu_*-\mu_i$ for each $i\in [n]$ is involved in such bound. The UCB algorithm provides an regret upper bound $O\tp{\sum_{i\in [n]\colon \Delta_i>0}\frac{\log T}{\Delta_i}}$ (see e.g.~\cite{GC11,KCG12,LS20}).

A recent line of research modeled the MAB problem in the streaming setting to incorporate the situation where the number of arms is huge and cannot be stored in the memory at the same time. In this model, the $n$ arm arrives one by one in a stream, and only $m<n$ arms and their statistics can be stored at the same time. When an arm is read into memory and stored, it can be pulled and its corresponding statistics can be stored. Once an arm is discarded from the memory, all of its information will be forgotten and can never be pulled again. The minimax regret of the problem has recently been settled in~\cite{HYZ25,Wan23}, which is $\Theta\tp{\frac{n-m}{n^{\frac{2}{3}}}\cdot T^{\frac{2}{3}}}$ for any $2\le m\le n-1$. 

However, the gap-dependent regret in this setting has not been fully explored. In fact, the minimax regret bound is derived by considering the worst cases over all choices of possible gaps, and might be much worse compared to the case where the actual gap values are explicitly taken into account. To the best of our knowledge, the only known upper bound is $O\tp{\sum_{i: \Delta_i > 0}\frac{\sqrt{T}\log T}{\Delta_i}}$, proved in~\cite{AKP22}, which holds only for $m=O(1)$. This upper bound already suggests that the gap-dependent regret bound can be superior to the minimax regret bound ($\wt O(\sqrt{T})$ v.s. $O\tp{T^{\frac{2}{3}}}$). However, as mentioned before, it is well known that when no memory constraint is considered, the dependency on $T$ in the gap-dependent bound can be as low as $\log T$.  Therefore, it is natural to ask what is the correct regret bound in the memory-constrained setting, and particularly how the memory affects the bound. 

 On the other hand, since the mean gap  $\Delta_i$'s are part of the input instance and might depend on $T$, there might be a trade-off between the dependency on $\frac{1}{\Delta_i}$ and $T$ in the regret bound. To capture this trade-off, we introduce a new parameter $\alpha\ge 1$ and aim at establishing the regret bounds of the form $f(\alpha, m,n,T)\cdot \sum_{i\colon \Delta_i>0} \Delta_i^{1-2\alpha}$ for some function $f$. Therefore previous gap-dependent bounds, either with or without memory constraint, correspond to the case $\alpha=1$. 



\subsection{Our results}
In this work, we design new algorithms and prove matching regret lower bounds for the problem, confirming that the gap-dependent regret is $\Theta_\alpha\tp{\frac{(n-m)T^{\frac{1}{\alpha + 1}}}{n^{1 + {\frac{1}{\alpha + 1}}}}\displaystyle\sum_{i:\Delta_i > 0}\Delta_i^{1 - 2\alpha}}$ when $m\ge \frac{2}{3}n$ and $\Theta_\alpha\tp{\frac{T^{\frac{1}{\alpha + 1}}}{m^{{\frac{1}{\alpha + 1}}}}\displaystyle\sum_{i:\Delta_i > 0}\Delta_i^{1 - 2\alpha}}$ when $2\leq m < \frac{2}{3}n$ for any constant $\alpha\ge 1$. Our results are summarized in \Cref{tab:results}.

\begin{table}[H] 
    \caption{Summary of results for streaming MAB}
    \label{tab:results}
    \renewcommand{\arraystretch}{2}
    \begin{center}
        \begin{tabular}{m{0.2\textwidth}<{\centering} m{0.3\textwidth}<{\centering} m{0.3\textwidth}<{\centering}}
            ~ & Regret Bounds & Memory  \\
           \hline
           \multirow{2}*{\cite{Wan23}} & $O\tp{n^{\frac{1}{3}} T^{\frac{2}{3}}}$ & $m=\Theta(\log^* n)$ \\
            \cline{2-3}
            ~ & $\Omega\tp{n^{\frac{1}{3}} T^{\frac{2}{3}}}$ & $m\leq \frac{n}{20}$ \\
            \hline
            \cite{HYZ25} & $\Theta\tp{\frac{n-m}{n^{\frac{2}{3}}}\cdot T^{\frac{2}{3}}}$ & $2\leq m < n$\\
            \hline
            \cite{AKP22} & $O\tp{\sum_{i: \Delta_i > 0}\frac{\sqrt{T}\log T}{\Delta_i}}$ & $m = O(1)$\\
            \hline
            \multirow{2}*{\scriptsize [This work]} & $\alpha = 1,\;\Theta\tp{\frac{(n-m)\sqrt{T}}{n^{\frac{3}{2}}}\displaystyle\sum_{i:\Delta_i > 0}\frac{1}{\Delta_i}}$ & $m\ge \frac{2}{3}n$ \\
            \cline{2-3}
            ~ & $\alpha = 1,\;\Theta\tp{\frac{\sqrt{T}}{m^{\frac{1}{2}}}\displaystyle\sum_{i:\Delta_i > 0}\frac{1}{\Delta_i}}$ & $2\leq m < \frac{2}{3}n$ \\
            \hline
            \multirow{2}*{\scriptsize [This work]} & $\forall \alpha\ge 1,\;\Theta_\alpha\tp{\frac{(n-m)T^{\frac{1}{\alpha + 1}}}{n^{1 + {\frac{1}{\alpha + 1}}}}\displaystyle\sum_{i:\Delta_i > 0}\Delta_i^{1 - 2\alpha}}$ & $m\ge \frac{2}{3}n$ \\
            \cline{2-3}
            ~ & $\forall \alpha\ge 1,\;\Theta_\alpha\tp{\frac{T^{\frac{1}{\alpha + 1}}}{m^{{\frac{1}{\alpha + 1}}}}\displaystyle\sum_{i:\Delta_i > 0}\Delta_i^{1 - 2\alpha}}$ & $2\leq m < \frac{2}{3}n$ \\
            \hline
        \end{tabular}
    \end{center}
\end{table}


Similar to the minimax regret case, the algorithm for the large memory case ($m\ge \frac{2}{3}n$) differs from that of the small memory case ($m<\frac{2}{3}n$). The reason is that the player continually faces the task of determining which arm to discard from the memory during the game. The task is called the \emph{best arm retention} (BAR) problem and has been recently thoroughly studied~\cite{HYZ25,CHZ24}. It is known that the complexity of the problem is the same as that of the best arm identification (BAI) problem when the memory is small while a more efficient algorithm exists when the memory is large. Therefore, algorithms tailored for both small and large memory are necessary.  

Our bounds suggest many interesting behaviors of the model. Taking $\alpha=1$, our results reveal that with $m=n-1$, namely with only one unit less memory, the dependency on $T$ significantly increases from $\log T$ to $\sqrt{T}$. 
Another interesting phenomenon shown by our results is that the regret is not ``smooth'' in $m$, as shown in \Cref{fig:results} when $\alpha=1$\footnote{Since we only provide non-asymptotic bounds, the curve in \Cref{fig:results} demonstrates the regret bound qualitatively. In fact, the threshold $m=\frac{2}{3}n$ can be replaced by any $m=c\cdot n$ for constant $c\in [\frac{2}{3},1)$.}. This is in sharp contrast with the minimax regret $\Theta\tp{\frac{n-m}{n^{\frac{2}{3}}}\cdot T^{\frac{2}{3}}}$ which is smooth with respect to $m$ for all $2\le m<n$. Such non-smoothness appears in our lower bound proof in the following manner. We essentially show that give any $\alpha\ge 1$ and any $m\leq k < n$, there exists a set of hard instances on which any algorithm incurs regret $\Omega\tp{16^{-\alpha}\cdot\frac{(k-m+1) T^{\frac{1}{\alpha+1}}}{k^{1 + \frac{1}{\alpha+1}}} \sum_{i:\Delta_i > 0}\Delta_i^{1-2\alpha}}$. Therefore, the best lower bound is obtained by optimizing $k$. For $m < \frac{2}{3}n$, the best choice is $k=\frac{3}{2}m$ while for $m\ge \frac{2}{3}n$, the best choice is $k=n-1$. 

It is helpful to compare our results with the previous ones for $\alpha=1$. In this case, our bound is $\Theta\tp{\frac{n-m}{n^{\frac{3}{2}}}\sum_{i\colon\Delta_i>0}\frac{\sqrt{T}}{\Delta_i}}$ when $m\ge \frac{2}{3}n$ and $\Theta\tp{\frac{1}{\sqrt{m}}\sum_{i\colon\Delta_i>0}\frac{\sqrt{T}}{\Delta_i}}$ when $2\leq m < \frac{2}{3}n$. It improves the previous best bound in~\cite{AKP22}, which only holds for $m=O(1)$, and also fills the blank space in the large memory setting. 

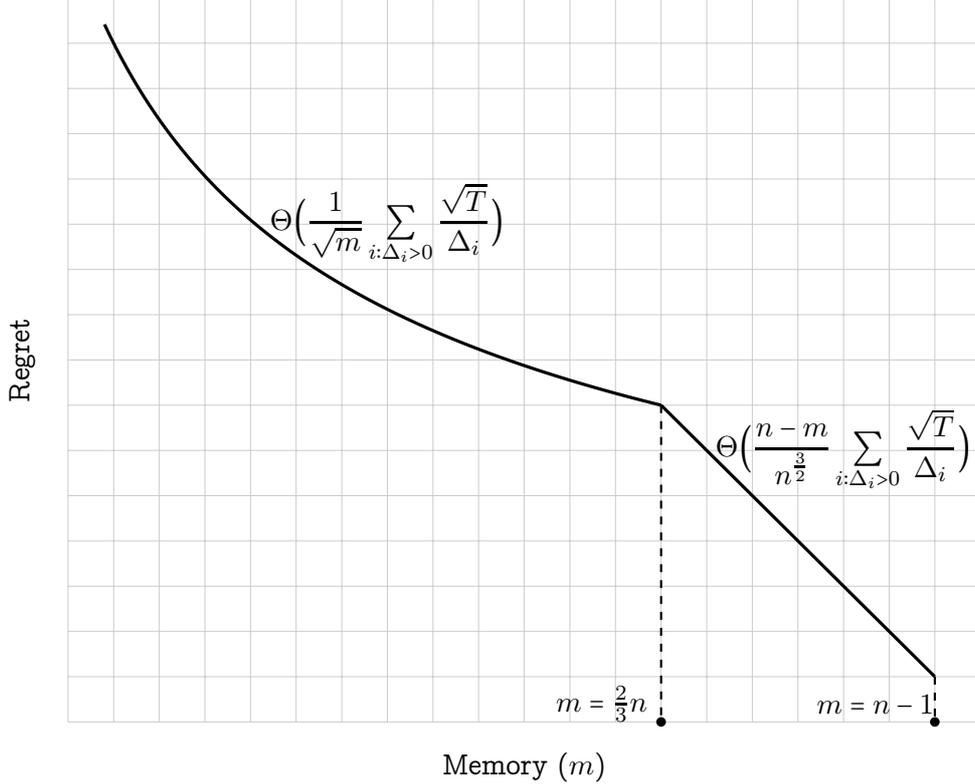
\begin{figure}
    \centering  
    \begin{tikzpicture}[scale=0.6]
    
        \draw[very thin,color=gray!40] (3,1) grid (23,17);
        
        \draw[very thick, domain=3.8:16,samples=100,black] plot (\x,{32/sqrt(\x)});
        
        \draw[very thick, domain=16:22,samples=100,black] plot (\x,{-\x + 24});
    
        \node at (10,12) {$\Theta\Big(\displaystyle\frac{1}{\sqrt{m}}\sum_{i: \Delta_i > 0} \frac{\sqrt{T}}{\Delta_i}\Big)$};
    
        \node at (20,7) {$\Theta\Big(\displaystyle\frac{n - m}{n^{\frac{3}{2}}}\sum_{i: \Delta_i > 0} \frac{\sqrt{T}}{\Delta_i}\Big)$};
    
        \draw[thick, dashed, color=black] (16,8) -- (16,1);
        \draw[thick, dashed, color=black] (22,2) -- (22,1);
    
        \node at (14.7, 1.4) {$m = \frac{2}{3}n$};
        \node at (20.7, 1.4) {$m = n - 1$};
        \fill[black] (16,1) circle (3pt);
        \fill[black] (22,1) circle (3pt);
        \node at (13, 0) {Memory ($m$)};
        \node[rotate= 90] at (2, 9) {Regret};
    \end{tikzpicture}
    \caption{Regret with respect to the memory size $m$.}
    \label{fig:results}
\end{figure}

\subsection{Related work} 
The MAB problem was first introduced in~\cite{Rob52}. The work \cite{AB09} proved an optimal regret bound of $\Theta(\sqrt{nT})$ for both stochastic and adversarial cases. Then \cite{LSPY18} first took the streaming MAB problem into consideration and obtained an instance-dependent upper bound using $O(\log T)$ passes and $O(1)$ memory. The work of~\cite{CK20} gave a generalized upper bound of $O\tp{\frac{n^{\frac{3}{2}}}{m}\sqrt{T\log \frac{T}{nm}}}$ for $2\leq m<n$ in $O(\log T)$ passes. And \cite{AW24,KW25} further explores the sample-pass trade-offs. Subsequently, the works of \cite{MPK21} and \cite{Wan23} studied the single-pass scenario and \cite{Wan23} gave tight regret bounds of $\Theta\tp{n^{\frac{1}{3}}T^{\frac{2}{3}}}$ when $\log^* n\leq m \leq \frac{n}{20}$. The work of \cite{AKP22} provided the first minimax regret lower bound with regard to the  number of passes $P$. They also gave an instance-dependent upper bound of $O\tp{\sum_{i: \Delta_i > 0}\frac{\sqrt{T}\log T}{\Delta_i}}$ when $m = O(1)$ and $P=1$. Very recently,~\cite{HYZ25} studied the minimax regret bound in the multi-pass setting and obtained tight bounds in terms of $m,n,T$ and $P$. In particular, they obtained a tight bound of $\Theta\tp{\frac{n-m}{n^{\frac{2}{3}}}\cdot T^{\frac{2}{3}}}$ in a single pass.

\section{Preliminaries} \label{sec:prelim}
\paragraph{Multi-armed bandit (MAB)}

We defined the problem of MAB at the beginning of the introduction. Here we introduce some further notations and define the game in detail. We use a mean vector $\nu = \tp{\mu_1, \mu_2, \cdots, \mu_n}$ to denote an instance of MAB in which the $i$-th arm has a Bernoulli reward  $\Ber{\mu_i}$. We use $i^*=\arg\max_{i\in [n]} \mu_i$ to denote the index of the arm with maximum mean reward. We assume the choice of $i^*$ is unique. We will sometimes call the $i$-th arm $\arm_i$ and write $\arm_{i^*}$ as $\arm_*$. We will also write $\mu_*$ instead of $\mu_{i^*}$ for convenience.

For every $i\in [n]$, we use a random variable $T_i \defeq \sum_{t=1}^T  \1{A_t = i}$ to denote the number of times the $i$-th arm has been pulled during the game. We use $\Pr[\nu]{\cdot}$ and $\E[\nu]{\cdot}$ to denote the probability and expectation of the algorithm running on instance $\nu$.

In the MAB game, the player is given a set of $n$ arms, denoted as $[n]$. Each arm has a reward mean $\mu_i$ from a fixed distribution $\+D_i$. A $T$-round decision game starts as follows: in each round $t\in[T]$, the player first selects an arm $A_t\in[n]$ to pull based on the information observed in previous rounds; then the player observes and gains reward of $r_t(A_t)\sim \+D_{A_t}$. The player's objective is to minimize the difference between the cumulative reward of the best arm and the player's own cumulative reward. That is, the player aims to design an algorithm $\+A$ to minimize the expected regret 
$
\E{R(T)} = \E{\sum_{t=1}^T r_t(i^*) - r_t(A_t)} = \E{\sum_{t=1}^T \mu_* - \mu_{A_t}}.
$

\paragraph{Streaming stochastic MAB.}
The streaming MAB model was first formalized in~\cite{LSPY18}. In this model, the $n$ arm arrives sequentially in the stream and the number of arms that can be stored at the same time is substantially smaller than $n$. A single pass means every arm in the stream only comes once, that is, if an arm is not stored yet or has been discarded from the memory, it cannot be retrieved later. We consider the worst-case order of the stream. 

In each round $t\in [T]$, the player acts in two stages, which include manipulating arm storage (discard arms in the memory, read new arms in the stream) and pulling arm (choose an arm $A_t$ in the memory to pull, observe and gain rewards) respectively. We emphasize that in one round, the player can discard and read any number of arms (including zero) but can only pull exactly once.

\paragraph{Best arm retention.}
To obtain the lower bound, we observe that it is important for an algorithm to pull an arm enough times  before discarding it. Otherwise, it can probably discard all good arms. The work of~\cite{CHZ24} modeled and called it the \textit{best arm retention} (BAR) problem. The BAR problem is to retain $m$ arms out of all $n$ arms after $T$ rounds and make sure that the best arm is not discarded with high probability. In other words, the player has to discard $n-m$ relatively bad arms during the game. A $\tp{\eps,\delta}$-PAC algorithm for BAR problem satisfies that for any fixed parameter $\eps,\delta \in (0,1)$, it retains an $\eps$-best arm in the $m$ arms with probability at least $1-\delta$. 

\paragraph{Gap-dependent bound.}

This paper focuses on gap-dependent regret bounds. Define the gap vector of a certain instance $\nu$ as $S(\nu) = (\Delta_i(\nu))_{i\in [n]}$ where $\Delta_i(\nu)$ is the mean gap between the best arm and the $i$-th arm in the instance $\nu$. For an instance $\nu$ of MAB, the gap-dependent regret ideally involves parameters $m,n,T$ and the gap vector $S(\nu)$. In this work, we define the \emph{lower bound} for the gap-dependent regret in the following sense.

\begin{definition}\label{def:lb}
    Let $m,n$ be fixed. Denote $\Pi$ as the set of all possible algorithms with memory size $m$, and $\@I$ as the set of all possible instances with $n$ arms. We say $R(m,n,T,S)$ is a non-asymptotic gap-dependent regret lower bound, if and only if for any algorithm $\+A \in \Pi$, there exists $\nu \in \@I$ such that
    $
        R^{\+A,\nu}(T) \ge R(m,n,T,S(\nu)),
    $
    for sufficiently large $T$, where $R^{\+A,\nu}(T)$ represents the regret incurred by running $\+A$ on $\nu$ after $T$ rounds.
\end{definition}
The \emph{non-asymptotic bounud} means that $T$ is a part of the input and therefore $\Delta_i$ might depend on $T$. Compared to the minimax lower bound, $R(m,n,T,S)$ is a functional depending on the gap-vector $S(\nu)=(\Delta_i(\nu))_{i\in [n]}$ where each $\Delta_i(\cdot)$ is a function on the instance. Nevertheless, we will use $R(T)$ as a shorthand for $R(m,n,T,S)$ when no ambiguity arises.

We will use the UCB algorithm as a black box and place the details of it in \Cref{sec:ucb-app}. We will use the following important property of the UCB algorithm in our analysis.
\begin{lemma}[\cite{Aue02}] \label{lem:ucb-sc}
    Giving $n$ arms which have Bernoulli rewards in the memory and running UCB on them for $T$ rounds, then for any $\arm_i$ with $\Delta_i > 0$:
   $
    \E{T_i} \leq \frac{8\log T}{\Delta_i^2}.
    $
\end{lemma}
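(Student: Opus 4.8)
The plan is to run the textbook analysis of UCB1 from~\cite{Aue02} on the (fixed) set of arms sitting in memory. Recall that, with the horizon $T$ known, the algorithm maintains for each stored arm $i$ the index $\mathrm{UCB}_i(t)\defeq\hat\mu_i(t)+\sqrt{2\log T/N_i(t)}$, where $N_i(t)$ is the number of pulls of $\arm_i$ strictly before round $t$ and $\hat\mu_i(t)$ is the empirical mean of those pulls, and plays $A_t=\arg\max_i\mathrm{UCB}_i(t)$ (each stored arm being pulled once at the start to initialize). Fix a suboptimal arm with $\Delta_i>0$ and set $\ell\defeq\lceil 8\log T/\Delta_i^2\rceil$. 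The first step is the trivial decomposition
\[
T_i\;\le\;\ell\;+\;\sum_{t=1}^{T}\1{A_t=i,\ N_i(t)\ge\ell},
\]
so it remains to bound the expectation of the sum by $O(1)$.

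The second step is the standard three-way case split: whenever $A_t=i$ while $N_i(t)\ge\ell$, the inequality $\mathrm{UCB}_i(t)\ge\mathrm{UCB}_*(t)$ forces at least one of (a) $\hat\mu_*(t)+\sqrt{2\log T/N_*(t)}\le\mu_*$, (b) $\hat\mu_i(t)\ge\mu_i+\sqrt{2\log T/N_i(t)}$, or (c) $\Delta_i<2\sqrt{2\log T/N_i(t)}$. The third step rules out (c): on the event $N_i(t)\ge\ell\ge 8\log T/\Delta_i^2$ one has $2\sqrt{2\log T/N_i(t)}\le\Delta_i$, which is precisely where the constant $8$ originates. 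For (a) and (b), the fourth step is Hoeffding's inequality: conditioned on $\arm_*$ (resp.\ $\arm_i$) having been pulled exactly $s$ times, each of these deviation events has probability at most $\exp(-4\log T)=T^{-4}$; union-bounding over the round $t\le T$ and over the at most $t$ possible values of the relevant pull count contributes at most $\sum_{t\le T}2t\,T^{-4}\le 2/T^2$ to $\E{T_i}$. Summing the three contributions yields $\E{T_i}\le 8\log T/\Delta_i^2+O(1)$.

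I expect the only real subtlety to be cosmetic: upgrading $8\log T/\Delta_i^2+O(1)$ to the clean bound $8\log T/\Delta_i^2$ as stated. The additive constant (classically $1+\pi^2/3$) can be absorbed because the lemma is invoked only for sufficiently large $T$---where $\log T/\Delta_i^2$ dominates, since $\Delta_i\le 1$---or, equivalently, because the extra $O(\Delta_i)$ slack it would add to the per-arm regret is lower-order in every application. No streaming-specific reasoning enters, since the statement concerns UCB run in isolation on whatever arms happen to occupy the memory.
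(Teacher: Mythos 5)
Your proof is the standard UCB1 analysis from the cited reference \cite{Aue02}; the paper itself offers no proof of this lemma (it is invoked as a black box, with only the algorithm's description reproduced in the appendix), so your argument coincides with the canonical one the citation points to. The one caveat you already flag is real: the decomposition yields $\E{T_i}\le \frac{8\log T}{\Delta_i^2}+1+O(T^{-2})$ rather than the literal clean bound in the statement, but since the lemma is only ever applied for sufficiently large $T$ and the additive constant contributes only $O(\Delta_i)$ per arm to the regret, this discrepancy is harmless in every use the paper makes of it.
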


We will also use the following technical lemma, which is a simple consequence of the Hoeffding's inequality. Its proof is provided in \Cref{sec:ub-prelim}.

\begin{lemma}\label{lem:worse-win}
    Let $\arm_1$ and $\arm_2$ be two different arms with reward means of $\mu$ and $\mu+\Delta, \Delta>0$ respectively. Suppose we sample each arm $L$ times and obtain empirical mean of $\wh{\mu}_1$ and $\wh\mu_2$ respectively, then 
    $
        \Pr{\wh\mu_1 \ge \wh{\mu}_2} \leq e^{-\frac{L\Delta^2}{2}}.
    $
\end{lemma}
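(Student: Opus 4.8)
The plan is to reduce the statement to a single application of Hoeffding's inequality applied to the difference $\wh\mu_1-\wh\mu_2$. Write $\wh\mu_1=\frac1L\sum_{j=1}^L Y_j$ and $\wh\mu_2=\frac1L\sum_{j=1}^L Z_j$, where the $Y_j$ are i.i.d.\ draws from $\Ber{\mu}$, the $Z_j$ are i.i.d.\ draws from $\Ber{\mu+\Delta}$, and the two families are independent of each other. Setting $W\defeq\sum_{j=1}^L (Y_j-Z_j)$, the event $\{\wh\mu_1\ge\wh\mu_2\}$ coincides exactly with $\{W\ge 0\}$, and since $\E{W}=L\mu-L(\mu+\Delta)=-L\Delta<0$, we are precisely bounding the probability that a sum of independent bounded variables exceeds its mean by $L\Delta$.

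The key step is to note that each summand $Y_j-Z_j$ takes values in the interval $[-1,1]$, whose length is $2$, so that $\sum_{j=1}^L(b_j-a_j)^2=4L$ in the statement of Hoeffding's inequality. Hoeffding then yields
\[
    \Pr{\wh\mu_1\ge\wh\mu_2}=\Pr{W-\E{W}\ge L\Delta}\le\exp\left(-\frac{2(L\Delta)^2}{4L}\right)=\exp\left(-\frac{L\Delta^2}{2}\right),
\]
which is exactly the claimed bound. Note that ties $\wh\mu_1=\wh\mu_2$ are included on the left-hand side, but this causes no loss since the computation bounds $\Pr{W\ge 0}$ directly.

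There is no real obstacle here; the only points that deserve to be made explicit are that (i) the rewards are bounded in $[0,1]$ (which holds since they are Bernoulli, matching the paper's setting), so that $Y_j-Z_j\in[-1,1]$ and Hoeffding is applicable with range $2$ per coordinate, and (ii) the $2L$ samples are mutually independent, which is what licenses treating $W$ as a sum of $L$ independent terms. If one prefers, the same conclusion follows by bounding $\Pr{\wh\mu_1\ge\mu+\tfrac\Delta2}$ and $\Pr{\wh\mu_2\le\mu+\tfrac\Delta2}$ separately, each by $\exp(-L\Delta^2/2)$ after a union bound — but this yields a worse constant, so the one-shot argument on $W$ above is the cleaner route.
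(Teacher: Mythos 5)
Your proposal is correct and follows essentially the same route as the paper: both apply Hoeffding's inequality once to the sum $\sum_{j=1}^L(Y_j-Z_j)$ of independent differences lying in $[-1,1]$, observe that the event $\{\wh\mu_1\ge\wh\mu_2\}$ is the event that this sum exceeds its mean $-L\Delta$ by $L\Delta$, and obtain $\exp\bigl(-\tfrac{2(L\Delta)^2}{4L}\bigr)=e^{-L\Delta^2/2}$. The only difference is presentational; your remarks on boundedness and independence simply make explicit what the paper leaves implicit.
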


\section{Gap-Dependent Regret Upper Bounds} \label{sec:ub}
In this section, we will propose two algorithms, which apply to large $m$ and small $m$ respectively. The philosophy behind the two algorithms is the same: one tries to retain the good arm in the memory and therefore solves a BAR problem in the streaming setting. However, when the memory is large, this can be done more efficiently.



\subsection{Large memory case (\texorpdfstring{$\frac{2}{3}n \leq m \leq n-1$}{})} \label{sec:l-mem}

When $m\ge \frac{2}{3}n$, we can simply read the first $m$ arms into the memory and have enough room to replace parts of them by rest arms in one batch. We simply pick $n-m$ pairs out of the $m$ arms, compare them pairwise, and replace the $n-m$ worst arms with the remaining $n-m$ fresh arms in the stream. After the manipulation of the memory, we apply a standard UCB algorithm for arms in the memory in the remaining rounds. 


\begin{algorithm}[H]
\caption{Single-pass algorithm for MAB when $\frac{2}{3}n \leq m \leq n-1$}
\label{alg:l-mem}
\Input{Time horizon $T$, memory size $m$, number of arms $n$ and a constant $\alpha \ge 1$.}
    \begin{algorithmic}[1]
        \State Let $L \leftarrow \tp{\frac{2\alpha}{e}}^{\frac{\alpha}{\alpha+1}}\cdot \tp{\frac{T}{n}}^\frac{1}{\alpha + 1}$, $c \leftarrow n-m$;
        \State Read in the first $m$ arms;
        \State Choose $2c$ arms from the memory u.a.r and denote them by $S=\set{s_1,s_2,\dots,s_{2c}}$; \label{line:findS}
        \For{$i = 1, 2, \dots, c$}
            \State Pull $s_{2i - 1}, s_{2i}$ each $L$ times, calculate their empirical means respectively;
            \State Discard the one with less empirical mean; \label{line:beat}
        \EndFor
        \State Read in the remaining $c$ arms in the stream, denote all the $m$ arms in memory as $M$;
        \State Run UCB on $M$ until the game ends;  \label{line:ucb-l-mem} \Comment{The Exploitation Phase}
    \end{algorithmic}
\end{algorithm}

The strategy for the analysis of the algorithm is as follows. If the best arm $\arm_*$ does not show up in the first $m$ arms, then it must belong to the last $M$, and thus the UCB algorithm will take care of everything. Otherwise, $\arm_*$ has a probability of $\frac{2c}{m}$ to be chosen into $S$. Then we carefully analyze its probability of being beaten by another sub-optimal arm in $L$ rounds and deduce the bound for regret incurred by this bad event. We obtain the following theorem.

\begin{theorem} \label{thm:l-ub}
    Given any $\alpha \ge 1$ and any input instance, assuming $T$ is sufficiently large, \Cref{alg:l-mem} uses the memory of $m$ arms with expected regret
    \[
    \E{R(T)} = O\tp{\tp{\frac{2\alpha}{e}}^{\frac{\alpha}{\alpha+1}} \frac{\tp{n-m}T^{\frac{1}{\alpha + 1}}}{n^{1 + {\frac{1}{\alpha + 1}}}}\displaystyle\sum_{i:\Delta_i > 0}\Delta_i^{1 - 2\alpha}}.
    \]
\end{theorem}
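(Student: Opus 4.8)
The plan is to decompose the $T$ rounds into the exploration phase of lines~4--6 (exactly $2cL$ pulls, where $c=n-m$) and the exploitation phase, in which UCB is run on the surviving set $M$ for $T':=T-2cL$ rounds; for $T$ large enough in terms of $n,m,\alpha$ alone we have $2cL\le T$ and $T'\ge T/2$, so the algorithm is well-defined and $\E{R(T)}=\E{R_{\mathrm{expl}}}+\E{R_{\mathrm{UCB}}}$, and I would bound the two pieces separately. For $R_{\mathrm{expl}}$: since $S$ is a uniformly random $2c$-subset of the first $m$ arms and each chosen arm is pulled $L$ times, $\E{R_{\mathrm{expl}}}=L\cdot\frac{2c}{m}\sum_{i\text{ in first }m}\Delta_i\le\frac{2cL}{m}\sum_{i:\Delta_i>0}\Delta_i$; using $m\ge\frac23 n$ to replace $\frac1m$ by $O(1/n)$, substituting $L=(\frac{2\alpha}{e})^{\alpha/(\alpha+1)}(T/n)^{1/(\alpha+1)}$, and the elementary inequality $\Delta_i\le\Delta_i^{1-2\alpha}$ (valid for $\Delta_i\in(0,1]$, $\alpha\ge1$), this term is already $O$ of the claimed bound.

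The substance is in $R_{\mathrm{UCB}}$, and the clean way I would organize it is through a single lemma about running UCB on an arbitrary set $M$: write $\Delta_M:=\mu_*-\max_{l\in M}\mu_l=\min_{l\in M}\Delta_l\ge0$ (so $\Delta_M=0$ exactly when $\arm_*\in M$); then the expected UCB regret on $M$ over $T'$ rounds, measured against $\mu_*$ and conditioned on $M$, is at most $2\Delta_M T'+O(\log T')\sum_{l\in M:\Delta_l>0}\frac1{\Delta_l}$. The proof of this splits the arms of $M$: an arm $l$ with $\Delta_l\ge 2\Delta_M$ has within-$M$ suboptimality gap $\Delta_l-\Delta_M\ge\Delta_l/2$, so \Cref{lem:ucb-sc} bounds its expected pulls by $O(\log T'/\Delta_l^2)$ and hence its regret against $\mu_*$ by $O(\log T'/\Delta_l)$; every other arm has $\Delta_l<2\Delta_M$, so the regret it contributes is at most $2\Delta_M$ times its pull count, and these sum to at most $2\Delta_M T'$. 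Taking expectations, the $O(\log T')$ term becomes $O(\log T')\sum_{i:\Delta_i>0}\Delta_i^{1-2\alpha}$, which for $T$ large is dominated by the target bound (since $\frac1{\Delta_i}\le\Delta_i^{1-2\alpha}$ and $\log T$ grows slower than $T^{1/(\alpha+1)}$). So the crux reduces to bounding $2T'\,\E{\Delta_M}$.

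For $\E{\Delta_M}$ I would argue that $\Delta_M>0$ only on the event that $\arm_*$ was read into memory, selected into $S$, and lost its duel; by the symmetry of the random selection, $\arm_*$ is paired with a given first-$m$ arm $j$ with probability at most $\frac{2c}{m(m-1)}$, it then loses to $j$ with probability at most $e^{-L\Delta_j^2/2}$ by \Cref{lem:worse-win}, and in that event $\Delta_M\le\Delta_j$ because the winner $j$ remains in $M$. Hence $\E{\Delta_M}\le\frac{2c}{m(m-1)}\sum_{j:\Delta_j>0}\Delta_j\,e^{-L\Delta_j^2/2}$, so $2T'\,\E{\Delta_M}=O(cT/n^2)\sum_{j:\Delta_j>0}\Delta_j\,e^{-L\Delta_j^2/2}$. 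The main obstacle — and the reason for the particular choice of $L$ — is to match this sum, term by term, against $(\frac{2\alpha}{e})^{\alpha/(\alpha+1)}\frac{cT^{1/(\alpha+1)}}{n^{1+1/(\alpha+1)}}\Delta_j^{1-2\alpha}$: writing $\beta:=(\frac{2\alpha}{e})^{\alpha/(\alpha+1)}$ and substituting $w:=(T/n)^{1/(\alpha+1)}\Delta_j^2$ collapses the required inequality to the elementary fact $\max_{w\ge0}w^\alpha e^{-\beta w/2}=\beta$, which holds precisely because $\beta^{\alpha+1}=(2\alpha/e)^\alpha$ — that is, $L$ is tuned so that a wrong duel outcome for a gap-$\Delta_j$ arm is killed at exactly the rate needed. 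I expect this balancing, together with the observation that the crude ``regret $\le T$ on the bad event'' estimate is too lossy (it loses a factor $1/\Delta_j$, which is exactly why the $2\Delta_M T'$ refinement via the UCB lemma above is needed), to be the only genuinely delicate points; summing the two phases then yields the stated bound up to an absolute constant absorbed into the $O(\cdot)$.
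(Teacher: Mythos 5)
Your proposal is correct and follows essentially the same route as the paper's proof: the same exploration/exploitation split, the same event analysis for $\arm_*$ being selected into $S$ and losing a duel (with the same pairing probability $\frac{2c}{m(m-1)}$ and Hoeffding bound), the same UCB decomposition around the threshold $2\Delta_{\!{king}}$, and the same tuning of $L$ via the identity $\max_{x>0} x^\alpha e^{-x}=\alpha^\alpha e^{-\alpha}$. The only difference is organizational — you bound $\E{\Delta_M}$ directly and multiply by the horizon, while the paper conditions on each event $\+B_i$ and sums $\Pr{\+B_i}\E{R_2\mid \+B_i}$ — which amounts to the identical computation.
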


\begin{proof}
We first define some events in the probability space induced by running our algorithm on a fixed instance. For every $i\in [n]\cup\{*\}$,
\begin{itemize}
    \item $\+S_i:$ the $\arm_i$ is in the set $S$ found in \Cref{line:findS} of \Cref{alg:l-mem}.
    \item $\+C_i:$ the $\arm_i$ and $\arm_*$ are placed in the same group. 
    \item $\+B_i:$ the $\arm_i$ \emph{beats} $\arm_*$ (the empirical mean of $\arm_i$ is larger than $\arm_*$ in \Cref{line:beat} of \Cref{alg:l-mem}).
    \item $\+M_i:$ the $\arm_i$ is in the final collection of arms $M$.
\end{itemize}
Note that when $i=*$, we mean the best arm.
We use $L_1$ to denote the number of rounds before the execution of \Cref{line:ucb-l-mem} in \Cref{alg:l-mem} and $L_2$ be the remaining rounds consumed in \Cref{line:ucb-l-mem}. For each $i\in [n]$, we denote by $L_{1,i}$ and $L_{2,i}$ the number of rounds playing the $\arm_i$ during the first $L_1$ and the last $L_2$ rounds respectively. Clearly $L_1+L_2 = T$ and $\sum_{i\in [n]} L_{c,i}=L_c$ for $c=1,2$. We also use $R_1$ and $R_2$ to denote the regret incurred in the first $L_1$ rounds and the last $L_2$ rounds respectively. Then $R(T) = R_1+R_2$.


According to the position of $\arm_*$ in the stream, we classify the proof into two cases.

\paragraph{Case 1:} The $\arm_*$ is in the last $c=n-m$ arms. Therefore $\Pr{\+M_*} = 1$, which means that we can bound the regret in the exploitation phase using \Cref{lem:ucb-sc}. That is,
 
\begin{align*}
    \E{R(T)} &= \E{R_1} + \E{R_2} = \sum_{i: \Delta_i > 0}\Delta_i \cdot \E{L_{1,i}} + \E{R_2} \\
    & = \sum_{i: \Delta_i > 0}\Delta_i \cdot \tp{\Pr{\+S_i}\E{L_{1,i} \mid \+S_i} + \Pr{\ol{\+S}_i}\E{L_{1,i} \mid \ol{\+S}_i}}
    + \E{R_2} \\
    & \le \sum_{i: \Delta_i > 0}\Delta_i \cdot \frac{2c}{m}\cdot \tp{\frac{2\alpha}{e}}^{\frac{\alpha}{\alpha+1}}\cdot \tp{\frac{T}{n}}^\frac{1}{\alpha + 1} + \sum_{i: \Delta_i > 0}\Delta_i \cdot \frac{8\log T}{\Delta_i^2}\\
    & \overset{(\heartsuit)}{=} O\tp{\tp{\frac{2\alpha}{e}}^{\frac{\alpha}{\alpha+1}}  \frac{(n-m)T^{\frac{1}{\alpha + 1}}}{n^{1 + {\frac{1}{\alpha + 1}}}}\displaystyle\sum_{i:\Delta_i > 0}\Delta_i^{1 - 2\alpha}},
\end{align*}
where $(\heartsuit)$ is because $T$ is sufficiently large and $\forall 0< \Delta_i<1, \frac{1}{\Delta_i} \leq \Delta_i^{1-2\alpha}$.

\paragraph{Case 2:} The best arm $\arm_*$ is in the first $m$ arms. In this case, it is possible that $\arm_*$ does not belong to $M$ and we denote the best arm in $M$ as $\arm_{\!{king}}$. We first bound $\E{R_1}:$

\begin{align*}
    \E{R_1} & = \sum_{i: \Delta_i > 0} \Delta_i \cdot \tp{\Pr{\+S_i}\E{L_{1,i} \mid \+S_i} + \Pr{\ol{\+S}_i}\E{L_{1,i} \mid \ol{\+S}_i} }\\
    & \le \sum_{i: \Delta_i > 0} \Delta_i \cdot \frac{2c}{m} \cdot \tp{\frac{2\alpha}{e}}^{\frac{\alpha}{\alpha+1}} \cdot \tp{\frac{T}{n}}^\frac{1}{\alpha + 1} = O \tp{\tp{\frac{2\alpha}{e}}^{\frac{\alpha}{\alpha+1}}  \frac{(n-m)T^{\frac{1}{\alpha + 1}}}{n^{1 + {\frac{1}{\alpha + 1}}}}\displaystyle\sum_{i:\Delta_i > 0}\Delta_i}.
\end{align*}

Then we decompose $\E{R_2}$ according to whether $\arm_*$ belongs to $M$:
\begin{align}
        \E{R_2} &= \Pr{\+M_*}\E{R_2 \mid \+M_*} +\Pr{\ol{\+M}_*}\E{R_2 \mid \ol{\+M}_*} \notag \\
        & \le \E{R_2 \mid \+M_*} + \Pr{\ol{\+M}_*}\E{R_2 \mid \ol{\+M}_*} \notag \\
        & \overset{(\heartsuit)}{=}\E{R_2 \mid M_*} + \displaystyle\sum_{i\colon \arm_i \neq \arm_*}\Pr{\+B_i}\E{R_2 \mid \+B_i} \notag \\ 
        &\leq \sum_{i: \Delta_i > 0} \frac{8 \log T}{\Delta_i} + \displaystyle\sum_{i\colon \arm_i \neq \arm_*}\Pr{\+B_i}\E{R_2 \mid \+B_i},
\end{align}
where $(\heartsuit)$ holds since $\set{\+B_i \mid \arm_i \neq \arm_*}$ forms a partition of $\ol{\+M}_*$. For the term $\Pr{\+B_i}$, we have
\begin{align*}
    \Pr{\+B_i} & = \Pr{\+S_*\cap \+S_i\cap \+C_i\cap \+B_i} = \Pr{\+S_*\cap \+S_i}\Pr{\+C_i \mid \+S_*\cap \+S_i}\Pr{\+B_i \mid \+S_*\cap \+S_i\cap \+C_i} \\
    & \overset{(\spadesuit)}{\leq} \frac{{m - 2 \choose 2c - 2}}{{m \choose 2c}} \cdot \frac{1}{2c - 1} \cdot e^{-\frac{\Delta_i^2 L}{2}} \le \frac{4c}{m^2}\cdot e^{-\frac{\Delta_i^2 L}{2}},
\end{align*}
where $(\spadesuit)$ applies \Cref{lem:worse-win}. 

Then we turn to bound $\E{R_2\mid \+B_i}$ for every $i$ such that $\arm_i\ne \arm_*$. Since conditioned on $\+B_i$, $\arm_{\!{king}}\ne \arm_*$, the regret $\E{R_2\mid \+B_i}$ can be divided into three parts: those contributed by $\arm_{\!{king}}$, those $\arm_j$ with $\Delta_j$ close to $\Delta_{\!{king}}$ and those $\arm_j$ with $\Delta_j$ much larger than $\Delta_{\!{king}}$. We emphasize that the following inequalities involving conditional expectation $\E{\cdot\mid \+B_i}$ holds for all outcomes $\omega\in\+B_i$ (in the underlying probability space).

We have
\begin{align}
    \E{R_2\mid \+B_i}
    \le \Delta_{\!{king}}\cdot T + \sum_{j\colon \Delta_j > 0, \Delta_j\le 2\Delta_{\!{king}}} \Delta_j\cdot \E{L_{2,j} \mid \+B_i} + \sum_{j\colon \Delta_j>2\Delta_{\!{king}}} \Delta_j\cdot \E{L_{2,j} \mid \+B_i}.\label{eqn:R2-Bi-bound}
\end{align}
For those $\!{arm}_j$ with small $\Delta_j$, we have
\begin{align}
    \sum_{j\colon \Delta_j > 0, \Delta_j\le 2\Delta_{\!{king}}} \Delta_j\cdot \E{L_{2,j} \mid \+B_i} \le 2\Delta_{\!{king}}\cdot \sum_{j\colon \Delta_j\le 2\Delta_{\!{king}}} \E{L_{2,j}\mid \+B_i} \le 2\Delta_{\!{king}}\cdot T.\label{eqn:R2-Bi-jsmall}
\end{align}
For those $\!{arm}_j$ with large $\Delta_j$, we use \Cref{lem:ucb-sc} and obtain
\begin{align}
    \sum_{j\colon \Delta_j>2\Delta_{\!{king}}} \Delta_j\cdot \E{L_{2,j} \mid \+B_i} \le \sum_{j\colon \Delta_j>2\Delta_{\!{king}}} \Delta_j\cdot\frac{8\log T}{(\Delta_j-\Delta_{\!{king}})^2} \le \sum_{j\colon \Delta_j>2\Delta_{\!{king}}} \frac{32\log T}{\Delta_j}\label{eqn:R2-Bi-jlarge}.
\end{align}
Note that at every outcome $\omega\in\+B_i$, the random variable $\Delta_{\!{king}}(\omega)\le \Delta_i$ since $\!{king}$ is the best arm in $M$. Combining~(\ref{eqn:R2-Bi-bound}),~(\ref{eqn:R2-Bi-jsmall}) and~(\ref{eqn:R2-Bi-jlarge}), we obtain
\begin{align}
    \E{R_2\mid \+B_i} 
    \le 3\Delta_{\!{king}}\cdot T + \sum_{j\colon \Delta_j>2\Delta_{\!{king}}} \frac{32\log T}{\Delta_j} \le 3\Delta_{i}\cdot T + \sum_{j: \Delta_j > 0} \frac{32\log T}{\Delta_j}.\label{eqn:R2-Bi-all}
\end{align}
As a result,
\begin{align*}
    \sum_{i\colon \arm_i\ne \arm_*}\Pr{\+B_i}\E{R_2\mid \+B_i} &\le \sum_{i\colon \arm_i\ne \arm_*} \frac{4c}{m^2}\cdot e^{-\frac{\Delta_i^2 L}{2}} \tp{3\Delta_{i}\cdot T + \sum_{j: \Delta_j > 0} \frac{32\log T}{\Delta_j}} \\
    & \overset{(\diamondsuit)}{\le} \tp{\frac{2\alpha}{e}}^{\frac{\alpha}{\alpha+1}}\cdot\frac{12cn^{\frac{\alpha}{\alpha+1}}}{m^2} \displaystyle\sum_{i\colon arm_i \neq arm_*} \frac{T^{\frac{1}{\alpha+1}}}{\Delta_i^{2\alpha-1}} + \frac{128cn}{m^2} \sum_{j\colon \Delta_j > 0} \frac{\log T}{\Delta_j}\\
    & = O\tp{\tp{\frac{2\alpha}{e}}^{\frac{\alpha}{\alpha+1}} \frac{(n-m)T^{\frac{1}{\alpha + 1}}}{n^{1 + {\frac{1}{\alpha + 1}}}}\displaystyle\sum_{i:\Delta_i > 0}\Delta_i^{1 - 2\alpha} + \sum_{i: \Delta_i > 0}\frac{\log T}{\Delta_i}} \\
    & \overset{(\heartsuit)}{=} O \tp{\tp{\frac{2\alpha}{e}}^{\frac{\alpha}{\alpha+1}} \frac{(n-m)T^{\frac{1}{\alpha + 1}}}{n^{1 + {\frac{1}{\alpha + 1}}}}\displaystyle\sum_{i:\Delta_i > 0}\Delta_i^{1 - 2\alpha}},
\end{align*}
where ($\diamondsuit$) uses the inequality $e^{-x} \le \frac{\alpha^\alpha e^{-\alpha}}{x^\alpha}$ for $x>0$, ($\heartsuit$) is because $T$ is sufficiently large and $\forall 0< \Delta_i<1, \frac{1}{\Delta_i} \leq \Delta_i^{1-2\alpha}$.

In conclusion, we have $\E{R(T)} = \E{R_1} + \E{R_2} = O\tp{\tp{\frac{2\alpha}{e}}^{\frac{\alpha}{\alpha+1}} \frac{(n-m)T^{\frac{1}{\alpha + 1}}}{n^{1 + {\frac{1}{\alpha + 1}}}}\displaystyle\sum_{i:\Delta_i > 0}\Delta_i^{1 - 2\alpha}}$.
\end{proof}

\subsection{Small memory case \texorpdfstring{($2 \leq m < \frac{2}{3}n$)}{}} \label{sec:s-mem}
When $m < \frac{2}{3}n$, \Cref{alg:l-mem} is not feasible since the rest $n-m$ arms cannot be read into memory at once. We design \Cref{alg:s-mem} to deal with this case.

\begin{algorithm}[h]
\caption{Single-pass algorithm for MAB when $2 \leq m < \frac{2}{3}n$}
\label{alg:s-mem}
\Input{Time horizon $T$, memory size $m$, number of arms $n$ and a constant $\alpha \ge 1$.}
    \begin{algorithmic}[1]
        \State Let $L \leftarrow \tp{\frac{2\alpha}{e}}^{\frac{\alpha}{\alpha+1}}\cdot \tp{\frac{T}{m}}^{\frac{1}{\alpha+1}}$;
        \State Read the first $m-1$ arms into memory;
        \State Pull each of them $L$ times and calculate their empirical means respectively;
        \For{each arriving arm $\arm_i$}
            \State Choose an arm $\arm_j$ u.a.r. in the memory;\label{line:pickj}
            \State Read $\arm_i$ into memory;
            \State Pull $\arm_i$ $L$ times and calculate its empirical mean $\wh \mu_i$;
            \If{$\wh \mu_i > \wh \mu_j$}
                \State Discard $\arm_j$;
            \Else \State Discard $\arm_i$;
            \EndIf
        \EndFor
        \State Run UCB on all arms in the memory until the game ends; \label{line:ucb-s-mem}
        \Comment{The Exploitation Phase}
    \end{algorithmic}
\end{algorithm}

\Cref{alg:s-mem} will pull each arm $L$ times before the exploitation phase. Every incoming arm will be compared with a uniformly and randomly chosen arm in the memory. The worse of the two will be discarded to incorporate future new arms. This operation lasts until the end of the stream. Finally, we apply a standard UCB algorithm for arms in the memory until the end.


\begin{theorem}\label{thm:s-ub}
    Given any $\alpha \ge 1$ and any input instance, assuming $T$ is sufficiently large, \Cref{alg:s-mem} uses the memory of $m$ arms with expected regret
    $$
    \E{R(T)} = O\tp{\tp{\frac{2\alpha}{e}}^{\frac{\alpha}{\alpha+1}} \frac{T^{\frac{1}{\alpha + 1}}}{m^{\frac{1}{\alpha+1}}}\displaystyle\sum_{i:\Delta_i > 0}\Delta_i^{1 - 2\alpha}}.
    $$
\end{theorem}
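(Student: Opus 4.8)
The plan is to mirror the structure of the proof of \Cref{thm:l-ub}, splitting the analysis into the rounds consumed before the exploitation phase (call this regret $R_1$) and the rounds consumed by UCB on the final memory (call this $R_2$), so that $\E{R(T)}=\E{R_1}+\E{R_2}$. For $R_1$: every arm that ever enters the memory is pulled exactly $L$ times, and the total number of arms that enter the memory is $n$ (the first $m-1$ plus one for each arriving arm, and one initial slot is always free). Hence $R_1 \le L\sum_{i\colon\Delta_i>0}\Delta_i$, and since $L = \tp{\frac{2\alpha}{e}}^{\frac{\alpha}{\alpha+1}}\tp{\frac{T}{m}}^{\frac{1}{\alpha+1}}$ and $\Delta_i \le \Delta_i^{1-2\alpha}$ for $\Delta_i\in(0,1)$, this is within the claimed bound. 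The substantive work, as before, is bounding $\E{R_2}$, which requires controlling the probability that the best arm $\arm_*$ is discarded during the streaming comparisons, i.e.\ the probability of $\ol{\+M}_*$.

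The key step is to estimate $\Pr{\ol{\+M}_*}$ and, more precisely, to decompose $\ol{\+M}_*$ according to which arm $\arm_i$ first beats $\arm_*$ in a head-to-head comparison (the event I would call $\+B_i$, analogous to the large-memory proof), and then bound $\Pr{\+B_i}$. Here the combinatorics differ from \Cref{thm:l-ub}: $\arm_*$ survives in the memory through a sequence of comparisons, and it is discarded only if at some point it is the randomly chosen incumbent (\Cref{line:pickj}) against an incoming arm and loses the empirical-mean comparison. The clean way to handle this is to argue that conditioned on $\arm_*$ being in memory and on an incoming arm $\arm_i$ arriving, the probability that $\arm_*$ is the one selected for comparison is at most $\frac{1}{m-1}$ (there are at least $m-1$ arms in memory when the new one arrives), and that given it is selected, \Cref{lem:worse-win} bounds the probability $\arm_i$ beats $\arm_*$ by $e^{-L\Delta_i^2/2}$. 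A union bound over the (at most one) arm that can cause the ejection, or over all $i$ with $\arm_i\ne\arm_*$, then yields $\Pr{\+B_i} \le \frac{1}{m-1}\cdot e^{-L\Delta_i^2/2}$ — note the $\frac{1}{m}$ factor here, which is exactly what produces the $m^{\frac{1}{\alpha+1}}$ in the denominator of the final bound (in contrast to the $\frac{n-m}{n}$ factor in the large-memory case).

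With $\Pr{\+B_i}$ in hand, the rest reuses the machinery of \Cref{thm:l-ub} essentially verbatim: on the event $\+B_i$ the best arm $\arm_\!{king}$ remaining in $M$ has gap $\Delta_\!{king}\le\Delta_i$, so $\E{R_2\mid\+B_i} \le 3\Delta_i T + \sum_{j\colon\Delta_j>0}\frac{32\log T}{\Delta_j}$ by splitting into $\arm_\!{king}$, arms with $\Delta_j\le 2\Delta_\!{king}$, and arms with $\Delta_j>2\Delta_\!{king}$ (using \Cref{lem:ucb-sc} for the last group). Then
\[
\sum_{i\colon\arm_i\ne\arm_*}\Pr{\+B_i}\E{R_2\mid\+B_i} \le \sum_{i\colon\arm_i\ne\arm_*}\frac{e^{-L\Delta_i^2/2}}{m-1}\tp{3\Delta_i T + \sum_{j\colon\Delta_j>0}\frac{32\log T}{\Delta_j}},
\]
and applying $e^{-x}\le \alpha^\alpha e^{-\alpha}x^{-\alpha}$ with $x=L\Delta_i^2/2$, together with the chosen value of $L$, converts the first term into $O_\alpha\big(\frac{T^{1/(\alpha+1)}}{m^{1/(\alpha+1)}}\sum_i\Delta_i^{1-2\alpha}\big)$ and absorbs the $\log T$ term for $T$ large. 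Adding the $\E{R_2\mid\+M_*}\le\sum_{i\colon\Delta_i>0}\frac{8\log T}{\Delta_i}$ contribution and the bound on $\E{R_1}$ finishes the proof.

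The main obstacle I anticipate is making the conditional probability argument for $\Pr{\+B_i}$ fully rigorous, since the event that $\arm_*$ reaches a given comparison alive, which arm it is paired against, and the outcome of that comparison are all entangled across the stream of arrivals. One must be careful that "the first arm to beat $\arm_*$" is well-defined, that the $\frac{1}{m-1}$ selection-probability bound holds uniformly regardless of the history, and that \Cref{lem:worse-win} (which assumes a clean two-arm sampling experiment) genuinely applies to the comparison as embedded in the algorithm — i.e.\ that the empirical means used in the comparison are each averages of exactly $L$ fresh independent pulls, uncorrelated with the randomness determining that this comparison takes place. Handling the worst-case stream order and the adaptive selection cleanly is where the care is needed; the arithmetic afterward is routine given the template from \Cref{thm:l-ub}.
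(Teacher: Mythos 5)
Your overall architecture (the $R_1+R_2$ split, the $\frac{1}{m-1}e^{-L\Delta_i^2/2}$ probability bound, and the final arithmetic with $e^{-x}\le \alpha^\alpha e^{-\alpha}x^{-\alpha}$) matches the paper, but the decomposition of $\ol{\+M}_*$ has a genuine gap. You condition on the event $\+B_i$ that $\arm_i$ is the arm that \emph{directly} beats $\arm_*$, transplanted from the proof of \Cref{thm:l-ub}, and then claim that on $\+B_i$ the best surviving arm satisfies $\Delta_{\!{king}}\le\Delta_i$. In \Cref{alg:l-mem} this holds because the tournament is one-shot: the arm that beats $\arm_*$ wins its pair and is guaranteed to be in $M$. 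In \Cref{alg:s-mem} the eliminations are sequential, so the arm $\arm_i$ that ejects $\arm_*$ may itself be ejected later by a strictly worse arm, which may in turn be ejected, and so on; on your event $\+B_i$ there is no guarantee that any arm with gap at most $\Delta_i$ survives into the exploitation phase, so the pointwise bound $\E{R_2\mid \+B_i}\le 3\Delta_i T+\sum_{j\colon\Delta_j>0}\frac{32\log T}{\Delta_j}$ is simply false. Your closing paragraph worries about the selection probability and the applicability of \Cref{lem:worse-win} --- those parts are fine --- but not about this step, which is the actual obstruction.

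The paper's fix is to track the whole \emph{chain of eliminations} started at $\arm_*$: the maximal sequence $\arm_{s_1}=\arm_*$, with $\arm_{s_{i+1}}$ beating $\arm_{s_i}$ for each $i$, and to partition $\ol{\+M}_*$ by the identity of the \emph{last} arm $\arm_{s_\ell}$ of the chain (event $\+L_i$). By maximality, $\arm_{s_\ell}$ is still in memory at the start of the exploitation phase, so $\Delta_{\!{king}}\le\Delta_i$ does hold on $\+L_i$; and because each arm's empirical mean is computed once from $L$ fresh pulls and then frozen, the comparisons along the chain are transitive, so $\+L_i$ still forces $\wh\mu_i>\wh\mu_*$ together with one specific uniform choice at \Cref{line:pickj}, yielding the same bound $\Pr{\+L_i}\le\frac{1}{m-1}e^{-\frac{\Delta_i^2 L}{2}}$ that you derived for your $\+B_i$. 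With $\+B_i$ replaced by $\+L_i$, your remaining calculations (including the $R_1$ bound, which is correct as stated) go through verbatim.
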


\begin{proof}
We define notations $L_1, L_2, L_{1,i}, L_{2,i}, R_1, R_2$ similar to those in \Cref{sec:l-mem} with the only difference being that  $L_1$ now counts the number of rounds before the execution of \Cref{line:ucb-s-mem} in \Cref{alg:s-mem}. Denote $\arm_{\!{king}}$ as the best arm in the memory and $\+M_i$ as the event that $\arm_i$ is in the memory  during the exploitation phase. Conditioned on the event  $\ol{\+M}_*$ (namely the event that the best arm is \emph{not} in the memory during the exploitation phase), consider the maximal sequence of arms $\ab\{\arm_{s_i}\}_{i\in [\ell]}$ satisfying $\arm_{s_1} = \arm_*$ and $\arm_{s_{i+1}}$ beats $\arm_{s_i}$ for all $1\leq i \leq \ell-1$. By the maximality of the sequence, $\Pr{\+M_{s_\ell} \mid \ol{\+M}_*} = 1$, which deduces $\Delta_{\!{king}} \leq \Delta_{\ell}$ conditioned on $\ol{\+M}_*$.

For each $i\in [n]$, denote by $\+L_i$ the event that $\arm_i = \arm_{s_\ell}$, or equivalently the event that $\arm_i$ is the last one in the chain of elimination of $\arm_*$. Consider the time at which the $\arm_i$ is about to join the chain of elimination of $\arm_*$. There are two cases:
\begin{itemize}
    \item the $\arm_*$ is the incoming arm and at \Cref{line:pickj}, the algorithm picks $\arm_i$;
    \item or the $\arm_i$ is the incoming arm and at \Cref{line:pickj}, the algorithm picks an arm already in the chain.
\end{itemize}
Moreover, in both cases, we must have that $\wh\mu_i > \wh\mu_*$ in order for $\+L_i$ to happen. As a result, we have
\[
    \Pr{\+L_i} \leq \frac{1}{m-1}\cdot e^{-\frac{\Delta_i^2 L}{2}}.
\]

We first bound $\E{R_1}$,
\begin{align*}
    \E{R_1} 
    &= \sum_{i:\Delta_i > 0} \Delta_i \cdot L = \sum_{i:\Delta_i > 0} \Delta_i \cdot \tp{\frac{2\alpha}{e}}^{\frac{\alpha}{\alpha+1}}\cdot\tp{\frac{T}{m}}^{\frac{1}{\alpha+1}} = O\tp{\tp{\frac{2\alpha}{e}}^{\frac{\alpha}{\alpha+1}} \tp{\frac{T}{m}}^{\frac{1}{\alpha+1}}\sum_{i:\Delta_i > 0}\Delta_i}.
\end{align*}

Then we also decompose $\E{R_2}$ based on whether $\+M_*$ happens. 
\begin{align}
    \E{R_2} &= \Pr{\+M_*}\E{R_2 \mid \+M_*} +\Pr{\ol{\+M}_*}\E{R_2 \mid \ol{\+M}_*} \notag \\
    & \le \E{R_2 \mid \+M_*} + \Pr{\ol{\+M}_*}\E{R_2 \mid \ol{\+M}_*} \notag \\
    & \overset{(\clubsuit)}{=}\E{R_2 \mid M_*} + \displaystyle\sum_{i\colon \arm_i \neq \arm_*}\Pr{\+L_i}\E{R_2 \mid \+L_i} \notag \\ 
    &\leq \sum_{i: \Delta_i > 0} \frac{8 \log T}{\Delta_i} +  \frac{1}{m-1} \displaystyle\sum_{i\colon \arm_i \neq \arm_*}e^{-\frac{\Delta_i^2 L}{2}} \E{R_2 \mid \+L_i}, 
\end{align}
where $(\clubsuit)$ holds since $\set{\+L_i \mid \arm_i \neq \arm_*}$ forms a partition of $\ol{\+M}_*$. Conditioned on $\+L_i$, $\Delta_{\!{king}} \leq \Delta_i$, thus we have following holds:
\begin{align*}
    \E{R_2\mid \+L_i}
    &\le \Delta_{\!{king}}\cdot T + \sum_{j\colon \Delta_j > 0, \Delta_j\le 2\Delta_{\!{king}}} \Delta_j\cdot \E{L_{2,j} \mid \+L_i} + \sum_{j\colon \Delta_j>2\Delta_{\!{king}}} \Delta_j\cdot \E{L_{2,j} \mid \+L_i} \\
    &\le 3\Delta_{\!{king}}\cdot T + \sum_{j\colon \Delta_j>2\Delta_{\!{king}}} \Delta_j\cdot\frac{8\log T}{(\Delta_j-\Delta_{\!{king}})^2} \\
    &\le 3\Delta_i\cdot T + \sum_{j: \Delta_j > 0} \frac{32\log T}{\Delta_j}. 
\end{align*}

Therefore,
\begin{align*}
    \E{R_2} &\le  \sum_{i: \Delta_i > 0} \frac{8 \log T}{\Delta_i} +  \frac{1}{m-1} \displaystyle\sum_{i\colon \arm_i \neq \arm_*}e^{-\frac{\Delta_i^2 L}{2}}\cdot 3\Delta_i T + \frac{1}{m-1} \displaystyle\sum_{i\colon \arm_i \neq \arm_*} \sum_{j: \Delta_j > 0} \frac{32\log T}{\Delta_j} \\
    & \overset{(\diamondsuit)}{\le}\sum_{i: \Delta_i > 0} \frac{8 \log T}{\Delta_i} + \tp{\frac{2\alpha}{e}}^{\frac{\alpha}{\alpha+1}}\cdot \frac{3m^{\frac{\alpha}{\alpha+1}}T^{\frac{1}{\alpha+1}}}{m-1} \displaystyle\sum_{i\colon arm_i \neq arm_*} \Delta_i^{1-2\alpha} + \frac{32n}{m-1} \sum_{j\colon \Delta_j > 0} \frac{\log T}{\Delta_j}\\
    &= O\tp{\tp{\frac{2\alpha}{e}}^{\frac{\alpha}{\alpha+1}} \frac{T^{\frac{1}{\alpha + 1}}}{m^{\frac{1}{\alpha+1}}}\displaystyle\sum_{i:\Delta_i > 0}\Delta_i^{1 - 2\alpha} + \frac{n}{m}\sum_{i: \Delta_i > 0}\frac{\log T}{\Delta_i}} \\
    & \overset{(\spadesuit)}{=} O \tp{\tp{\frac{2\alpha}{e}}^{\frac{\alpha}{\alpha+1}} \frac{T^{\frac{1}{\alpha + 1}}}{m^{{\frac{1}{\alpha + 1}}}}\displaystyle\sum_{i:\Delta_i > 0}\Delta_i^{1 - 2\alpha}},    
\end{align*}
where ($\diamondsuit$) uses the inequality $e^{-x} \le \frac{\alpha^\alpha e^{-\alpha}}{x^\alpha}$ for $x>0$ and ($\spadesuit$) is because $T$ is sufficiently large and $\forall 0< \Delta_i<1, \frac{1}{\Delta_i} \leq \Delta_i^{1-2\alpha}$.

In conclusion, we have $\E{R(T)} = \E{R_1} + \E{R_2} =  O\tp{\tp{\frac{2\alpha}{e}}^{\frac{\alpha}{\alpha+1}} \frac{T^{\frac{1}{\alpha + 1}}}{m^{\frac{1}{\alpha+1}}}\displaystyle\sum_{i:\Delta_i > 0}\Delta_i^{1 - 2\alpha}}$.

\end{proof}

\section{Gap-Dependent Regret Lower Bounds}\label{sec:lb}

In this section, we will prove the regret lower bounds for every $2\le m<n$. We first provide our construction of the hard instances in \Cref{sec:hard-instances}. Then we reduce the problem of minimizing regret on these instances to the problem of best arm retention, whose sample complexity lower bounds have recently been established. Here we propose our lower bound of regret.

\begin{theorem}\label{thm:lb}
    Given any $\alpha \ge 1$, for any integer $k$ satisfying $m\leq k < n$ and any algorithm $\+A$, assuming $T$ is sufficiently large, there exists a set of hard instances on which the expected regret of $\+A$ is $\Omega\tp{16^{-\alpha}\cdot\frac{(k-m+1) T^{\frac{1}{\alpha+1}}}{k^{1 + \frac{1}{\alpha+1}}} \sum_{i:\Delta_i > 0}\Delta_i^{1-2\alpha}}$.
\end{theorem}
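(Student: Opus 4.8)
The plan is to construct a family of hard instances on which minimizing regret is essentially as hard as solving the \emph{best arm retention} (BAR) problem, and then to invoke the known sample-complexity lower bounds for BAR discussed in \Cref{sec:prelim}. Throughout, $k$ is fixed with $m\le k<n$; the two-regime statements in the introduction are recovered afterward by optimizing $k$ (taking $k=n-1$ when $m\ge\frac{2}{3}n$ and $k=\frac{3}{2}m$ otherwise), so no optimization over $k$ occurs inside this proof.

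First I would exhibit the hard instances (\Cref{sec:hard-instances}). They are centered on an initial block of the stream, of size $\Theta(k)$, that must be winnowed down to $m$ survivors before the remaining $n-k$ arms — made bad enough to be effectively inert to any sensible algorithm — arrive. The optimal arm is planted at one of $k-m+1$ positions inside this block, and the suboptimal arms of the block carry a range of gaps, calibrated around a base value $\Delta$ that will be tuned as an appropriate power of $T$ depending on $\alpha$, so that on the family $\sum_{i:\Delta_i>0}\Delta_i^{1-2\alpha}$ equals, up to constants, the quantity appearing in the bound. The family consists of one instance per planting position, all with the same gap vector up to relabeling, together with a base instance in which no block arm is distinguished.

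Next I would argue that any memory-$m$ algorithm with small regret on every instance of the family must solve BAR cheaply. The bridge is the identity $\E{R(T)}=\sum_{i:\Delta_i>0}\Delta_i\,\E{T_i}$: small regret forces, at the relevant gap scale(s), both few pulls of the block arms at that scale (controlling exploration) and a high probability of retaining an approximately optimal arm into the exploitation phase (otherwise $\Delta\cdot\Omega(T)$ regret is incurred). A change-of-measure estimate — a planted instance differs from the base instance only in the reward law of a single arm, so the KL-divergence of the two induced trajectory distributions is at most $O(\Delta^2)$ times the number of pulls of that arm — converts these two facts into the assertion that the algorithm is an $(\eps,\delta)$-PAC BAR algorithm on a $\Theta(k)$-arm instance with memory $m$ using too few samples, contradicting the BAR sample-complexity lower bound. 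A pigeonhole over the $k-m+1$ planting positions (using that at most $m$ arms are stored and at most $T$ pulls are ever made) is what produces the $(k-m+1)/k^{1+1/(\alpha+1)}$-type dependence, and the calibration of $\Delta$ is what promotes the $\log T$ of the $\alpha=1$ case to $T^{1/(\alpha+1)}$ and $\Delta_i^{-1}$ to $\Delta_i^{1-2\alpha}$, via balancing the exploration term against the mis-retention term $e^{-\Theta(\Delta^2 L)}\cdot\Delta\cdot\Omega(T)$ and then summing over scales.

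The step I expect to be the main obstacle is making the change of measure respect the streaming and memory constraints at once: the retention decision is not a single clean event but is spread over many rounds and interleaved with pulls, so one must show that for the adversarially chosen planting position the algorithm's effective sampling budget for that decision is genuinely limited — this is exactly what the BAR sample-complexity machinery (invoked as a black box) is designed to deliver, but splicing it into the regret accounting while keeping the exact $(k-m+1)/k^{1+1/(\alpha+1)}$ dependence (rather than losing a factor of $k$ or a $\log T$) is delicate, as is verifying that the chosen family of gaps reproduces $\sum_{i:\Delta_i>0}\Delta_i^{1-2\alpha}$ up to the constant $16^{-\alpha}$ absorbed into $\Omega$.
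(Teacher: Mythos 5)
Your high-level plan---reduce regret minimization on the first $k$ arms to best arm retention via a change of measure and then invoke the BAR sample-complexity lower bound---is indeed the route the paper takes, but there is a genuine gap in the step that is supposed to certify that a low-regret algorithm uses ``too few samples'' for BAR. You propose to bound the exploration budget directly from the regret on the planted instances: since every suboptimal arm in the block has gap of order $\eps$, a regret bound of $R^*=O\tp{16^{-\alpha}\frac{(k-m+1)T^{1/(\alpha+1)}}{k^{1+1/(\alpha+1)}}\sum_{i:\Delta_i>0}\Delta_i^{1-2\alpha}}$ only forces $\sum_{i:\Delta_i>0}\E{T_i}\le R^*/\eps$, and with the calibration $\eps=\Theta\tp{(k/T)^{\frac{1}{2+2\alpha}}}$ this works out to $\Theta\tp{\frac{(k-m+1)T}{k}}$ pulls. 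That is far \emph{more} than the BAR sample complexity $\Theta\tp{(k-m+1)(T/k)^{\frac{1}{\alpha+1}}}$ at accuracy $\eps$ (\Cref{lem:bar-lb}), so no contradiction arises; indeed an algorithm can solve BAR on the block essentially exactly while staying within its regret budget on these instances, because $\eps\to 0$ makes block pulls cheap. Your remark that the last $n-k$ arms are ``made bad enough to be effectively inert'' points in exactly the wrong direction: with an inert tail there is nothing that penalizes a long exploration phase at the $\eps$ scale.

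The paper closes this gap with a companion family $\mathcal{I}'$ whose first $k$ arms are \emph{identical} to those of the main family but whose last $n-k$ arms have mean $1$. Before the $(k+1)$-th arm is read the algorithm behaves identically on $\nu_i$ and $\nu_i'$; on $\nu_i'$ every pull of a block arm costs regret about $\tfrac12$ while the target regret there is only $O\tp{(k-m+1)(T/k)^{1/(\alpha+1)}}$ (all gaps in $\nu_i'$ are constants, so $\sum_i\Delta_i^{1-2\alpha}=O(4^{2\alpha}k)$). This is what forces the stage-one length $L_1$ to be $O\tp{(k-m+1)(T/k)^{1/(\alpha+1)}}$ on the $\nu_i$ as well (\Cref{lem:L1-small}), which is the sample budget that genuinely contradicts \Cref{lem:bar-lb} once one also shows, as you do, that small regret forces the retention-failure probability to satisfy $\delta\le\frac{k-m+1}{2k}$. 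Two smaller points: the paper needs only a single gap scale $\eps$ in the block (no summation over scales), and the $\frac{k-m+1}{k}$ factor is produced inside the BAR lower bound itself, via the constraint $\sum_{i=1}^k\Pr[\rho_1]{\tau_i}=m$, rather than by a separate pigeonhole over planting positions.
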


By picking $k=\frac{3}{2}m$ when $m<\frac{2}{3}n$ and $k=n-1$ otherwise, we obtain
\begin{corollary}
    Given any $\alpha \ge 1$, for any algorithm $\+A$ using $2\le m<n$ memory on $n$ arms, assuming $T$ is sufficiently large, there always exists a set of hard instances on which the expected regret of $\+A$ is $\Omega\tp{16^{-\alpha}\cdot\frac{T^{\frac{1}{\alpha+1}}}{m^{\frac{1}{\alpha+1}}} \sum_{i:\Delta_i > 0}\Delta_i^{1-2\alpha}}$ when $m < \frac{2}{3}n$ and $\Omega\tp{16^{-\alpha}\cdot\frac{(n-m) T^{\frac{1}{\alpha+1}}}{k^{1 + \frac{1}{\alpha+1}}} \sum_{i:\Delta_i > 0}\Delta_i^{1-2\alpha}}$ when $m\ge \frac{2}{3}n$.
\end{corollary}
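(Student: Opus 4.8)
The plan is to reduce regret minimization, on a carefully designed family of instances, to the streaming \emph{best arm retention} (BAR) problem, and then feed in the known sample-complexity lower bound for BAR. Fix $\alpha\ge 1$ and $m\le k<n$. The family I would use has $k+1$ ``active'' arms placed at the front of the (worst-case) stream, while all remaining $n-k-1$ arms are blatantly bad dummies of mean $0$, so that the dummies contribute only $O(n)$ to $\sum_{i:\Delta_i>0}\Delta_i^{1-2\alpha}$, which is negligible against the active arms once $T$ is large. Among the active arms all but one have mean $\frac12$ and one, placed at a uniformly random position, has mean $\frac12+\Delta$, so $\sum_{i:\Delta_i>0}\Delta_i^{1-2\alpha}=\Theta(k\,\Delta^{1-2\alpha})$ for every member of the family (an instance with a whole spectrum of gaps is then obtained either by letting the active arms carry geometrically spaced gaps and running the argument scale by scale, or by concatenating such blocks and using additivity of regret over disjoint arm blocks). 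The crucial knob is $\Delta$: I would set it to order $(k/T)^{1/(2(\alpha+1))}$, up to a factor depending only on $\alpha$ and on $(k-m+1)/k$ pinned down at the end, which is where the $16^{-\alpha}$ comes from. The point of making $\Delta$ this small is that the time budget $T$ then becomes genuinely scarce relative to the cost of \emph{safely} discarding arms, so the ``lose the good arm'' events occur with only polynomially (not exponentially) small probability; this is exactly what turns the $T$-dependence into $T^{1/(\alpha+1)}$ instead of $\log T$, with $\alpha$ entering as the power of $1/\Delta^2$ at which this polynomial regime sits.

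The heart is a dichotomy for an arbitrary algorithm $\+A$ run on a uniformly random member of the family. Let $a$ denote the best among the $m$ arms $\+A$ retains after it has processed all $k+1$ active arms. If, with probability bounded away from $0$, the arm $a$ has gap $\Omega(\Delta)$, then $\+A$ plays an $\Omega(\Delta)$-suboptimal arm for $\Omega(T)$ rounds and $\E{R(T)}=\Omega(T\Delta)$, which at the chosen scale of $\Delta$ already equals the target bound. Otherwise $a$ is an $\epsilon$-best arm with probability $\ge\frac23$ for every $\epsilon<\Delta$, so reading off $\+A$'s memory makes it a legitimate $(\epsilon,\frac13)$-PAC algorithm for streaming BAR on $k+1$ arms with memory $m$; the established BAR sample-complexity lower bound~\cite{CHZ24,HYZ25} then forces $\+A$ to make a prescribed number of pulls, and since all but one active arm is suboptimal with gap $\Delta$, a constant fraction of those pulls are suboptimal, again giving $\E{R(T)}=\Omega$ of the target. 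The BAR bound is also the source of the non-smooth memory dependence: it jumps at the $m$-versus-$k$ threshold, producing the factor $\frac{k-m+1}{k^{1+1/(\alpha+1)}}$ rather than something smooth in $m$, which is precisely why the statement is parametrized by $k$ and then optimized (to $k=\frac32 m$ or $k=n-1$) in the Corollary.

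I expect the dichotomy to be the main obstacle, for two reasons. First, converting ``small regret'' into ``retains a near-optimal arm with constant probability'' must close the loophole that $\+A$ could discard the good arm only near the very end of the game: the uniformly random placement of the good arm is what forces $\+A$ to protect \emph{every} discard and not just the early ones, but making this quantitative against an adaptive $\+A$ and a worst-case stream order needs care. Second, the BAR sample-complexity bound counts \emph{all} pulls whereas only pulls of suboptimal arms pay regret, so the construction — one good arm among $k$ identical $\frac12$-arms, randomly placed — has to be arranged so that the BAR-mandated pulls are provably mostly suboptimal. Once these structural points are in place, extracting the exact exponent $\frac{1}{\alpha+1}$ (rather than merely $\frac12$) and the constant $16^{-\alpha}$ is a calculation that mirrors the $e^{-x}\le \alpha^\alpha e^{-\alpha}/x^\alpha$ bookkeeping used in the upper-bound proofs — balancing the per-comparison sample size, the gap $\Delta$, and the failure probability — which I would carry out last.
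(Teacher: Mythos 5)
The statement you are asked to prove is obtained from \Cref{thm:lb} by optimizing the free parameter $k$ (taking $k=\frac{3}{2}m$ when $m<\frac{2}{3}n$ and $k=n-1$ otherwise); your proposal is essentially a proof sketch of \Cref{thm:lb} itself, and your optimization step matches the paper. The gap is in the construction. Your family puts one arm of mean $\frac12+\Delta$ among $k+1$ arms of mean $\frac12$ and fills the tail of the stream with arms of mean $0$ that are known to be bad. On that family the claimed lower bound is false: an explore-then-commit algorithm that runs a champion tournament over the $k+1$ active arms with $N=\Theta(\log T/\Delta^2)$ pulls per arm (two memory slots suffice) retains the best arm except with probability $kT^{-\Omega(1)}$ and incurs regret $O(kN\Delta)=O\big(k\log T\cdot (T/k)^{1/(2\alpha+2)}\big)$, which for large $T$ is asymptotically below the target $\Theta\big((k-m+1)(T/k)^{(2\alpha+1)/(2\alpha+2)}\big)$. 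Correspondingly, the second branch of your dichotomy cannot deliver the bound: with constant failure probability, the BAR sample-complexity lower bound (\Cref{lem:bar-lb}) only forces $\Omega((k-m+1)/\Delta^2)$ pulls of active arms, each costing at most $\Delta$ in regret, i.e.\ regret $\Omega((k-m+1)/\Delta)=\Omega\big((k-m+1)(T/k)^{1/(2\alpha+2)}\big)$ --- short of the target by a factor of $(T/k)^{\alpha/(\alpha+1)}$.

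The missing idea is the paper's second family $\mathcal{I}'$: each instance $\nu_i$ with a $\Ber{1/2}$ tail is paired with a twin $\nu_i'$ whose tail arms have mean $1$, and the two are indistinguishable until the $(k{+}1)$-st arm is read. On $\nu_i'$ every stage-one pull costs regret about $\frac12$, so a regret guarantee on $\mathcal{I}'$ forces the exploration phase to be as short as $O(f(k,m)T^{1/(\alpha+1)})$ rounds (\Cref{lem:L1-small}). It is this $\Theta(1)$-per-pull penalty, rather than the $\Theta(\Delta)$-per-pull penalty available in your single family, that rules out long $\log T/\Delta^2$-style exploration and produces a contradiction with the BAR sample-complexity bound (\Cref{lem:L1-large}). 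The same mechanism also closes your first branch's loophole (it guarantees $L_2\ge T/4$, so a retained $\Delta$-suboptimal arm really is played $\Omega(T)$ times); without $\mathcal{I}'$ neither branch of your dichotomy can be made quantitative. The two difficulties you flag (adaptive late discards, BAR counting all pulls) are real but secondary; the adversarial pairing of low-mean and high-mean tails is the ingredient your argument needs and does not have.
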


\subsection{The construction of the hard instances}\label{sec:hard-instances}

In this section, we provide the family of hard instances and prove in the next section that any algorithm exhibits large regret on at least \emph{one of} the instances in the family. 

Given a constant $\alpha \ge 1$, let $T$ be sufficiently large and $\eps = \frac{1}{4}\cdot \tp{\frac{k}{T}}^{\frac{1}{2+2\alpha}}$. For every $m\le k<n$, the hard instances for our problem are as follows:
\begin{center}
$
    \@I:\left\{
        \begin{array}{ll}
          \nu_1 = \Big(\underbrace{\frac{1}{2} + n\eps, \frac{1}{2} + (n-1)\eps, \frac{1}{2} + (n-1)\eps, \cdots, \frac{1}{2} + (n-1)\eps}_{k\mbox{ arms}}, \frac{1}{2}, \cdots, \frac{1}{2}\Big)\\
          \nu_2 = \Big(\frac{1}{2} + n\eps, \frac{1}{2} + (n+1)\eps, \frac{1}{2} + (n-1)\eps, \cdots, \frac{1}{2} + (n-1)\eps, \frac{1}{2}, \cdots, \frac{1}{2}\Big)\\
          \quad\quad\vdots\\
          \nu_k = \Big(\frac{1}{2} + n\eps, \frac{1}{2} + (n-1)\eps,\cdots, \frac{1}{2} + (n-1)\eps, \frac{1}{2} + (n+1)\eps, \frac{1}{2}, \cdots, \frac{1}{2}\Big)\\
        \end{array}
      \right.    
$
$
    \@I':\left\{
        \begin{array}{ll}
          \nu_1' = \Big(\underbrace{\frac{1}{2} + n\eps, \frac{1}{2} + (n-1)\eps, \frac{1}{2} + (n-1)\eps, \cdots, \frac{1}{2} + (n-1)\eps}_{k\mbox{ arms}}, 1, \cdots, 1\Big)\\
          \nu_2' = \Big(\frac{1}{2} + n\eps, \frac{1}{2} + (n+1)\eps, \frac{1}{2} + (n-1)\eps, \cdots, \frac{1}{2} + (n-1)\eps, 1, \cdots, 1\Big)\\
          \quad\quad\vdots\\
          \nu_k' = \Big(\frac{1}{2} + n\eps, \frac{1}{2} + (n-1)\eps,\cdots, \frac{1}{2} + (n-1)\eps, \frac{1}{2} + (n+1)\eps, 1, \cdots, 1\Big)\\
        \end{array}
      \right.     
$
\end{center}

In other words, we have two families of instances, $\@I$ and $\@I'$, each consisting of $k$ instances. The first arm of each instance has $\!{Ber}\tp{\frac{1}{2}+n\eps}$ reward. For each $2\le i\le k$, the $i$-th arm of $\nu_i$ and $\nu_i'$ has reward $\!{Ber}\tp{\frac{1}{2}+(n+1)\eps}$ and the remaining arms among $\arm_2$ to $\arm_k$ has reward $\!{Ber}\tp{\frac{1}{2}+(n-1)\eps}$. The difference between $\nu_i$ and $\nu_i'$ is the last $n-k$ arms where in $\nu_i$ they all have reward $\!{Ber}\tp{\frac{1}{2}}$ while in $\nu_i'$ the rewards are $\!{Ber}(1)$.

We will derive lower bounds for our streaming algorithm from the sample complexity lower bounds for the BAR problem. We also specify our hard instances for the BAR problem. Let $\nu_i[1:k]$ represent the first $k$ arms in $\nu_i$ and let $\rho_i=\nu_i[1:k]$. In other words,

\[
\rho_i = \Big(\frac{1}{2} + n\eps, \frac{1}{2} + (n-1)\eps,\cdots, \underbrace{\scriptsize \frac{1}{2} + (n+1)\eps}_{\mbox{\scriptsize the $i$-th arm}}, \cdots,\frac{1}{2} + (n-1)\eps \Big).
\]

We construct the set of instances $\@I_0 = \set{\rho_i = \nu_i[1:k], i\in [k]}$. Then we have the following sample complexity for BAR on $\@I_0$.

\begin{lemma}[implicitly in \cite{CHZ24}] \label{lem:bar-lb}
    For any $(\eps,\delta)$-PAC algorithm for the BAR problem on instances in $\@I_0$ such that $\eps \leq \frac{1}{8(n-1)}$ and $\delta \leq \frac{k-m}{k} \tp{1-\beta}$, where $\beta \in (0,1)$ is an arbitrary constant, its sample times $T$ on the input $\rho_1$ satisfies 
    $
    \E[\rho_1]{T} \ge \frac{\beta}{32}\cdot \frac{k-m-\delta}{\eps^2}\log{\frac{k-m-\delta}{(k-1)\delta}}.
    $
\end{lemma}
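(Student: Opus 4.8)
The plan is to obtain the bound from the $(\eps,\delta)$-PAC guarantee through a change-of-measure argument, exploiting that $\@I_0$ is built so that the unique $\eps$-optimal arm of each instance is its true best arm. First I would record what the guarantee forces. In $\rho_i$ the best arm is $\arm_i$, while every competitor trails it by at least $\eps$ ($\arm_1$ by exactly $\eps$ and the rest by $2\eps$); hence $\arm_i$ is the only $\eps$-optimal arm, and a $(\eps,\delta)$-PAC algorithm must keep it in memory with probability at least $1-\delta$. Writing $D$ for the size-$(k-m)$ set of discarded arms and $a_i\defeq\Pr[\rho_1]{\arm_i\in D}$, this yields $a_1\le\delta$ on $\rho_1$ and $\Pr[\rho_i]{\arm_i\in D}\le\delta$ on $\rho_i$. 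Since exactly $k-m$ arms are discarded, $\sum_{j=1}^k a_j=k-m$, so $\sum_{i=2}^k a_i=(k-m)-a_1\ge k-m-\delta$.

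Next I would apply a change of measure arm by arm. For each $i\ge2$ the instances $\rho_1$ and $\rho_i$ differ only in the law of $\arm_i$, so the transportation inequality of Garivier--Kaufmann--Capp\'e, applied to the event $\set{\arm_i\in D}$, gives
\[
    \E[\rho_1]{T_i}\cdot d\tp{\frac12+(n-1)\eps,\ \frac12+(n+1)\eps}\ \ge\ d\tp{a_i,\ \Pr[\rho_i]{\arm_i\in D}}\ \ge\ d\tp{a_i,\delta},
\]
where $d(x,y)=x\log\frac{x}{y}+(1-x)\log\frac{1-x}{1-y}$ is the binary relative entropy and $T_i$ counts the pulls of $\arm_i$; the last step uses $\Pr[\rho_i]{\arm_i\in D}\le\delta$ together with monotonicity of $d(a_i,\cdot)$ below $a_i$, and is valid whenever $a_i\ge\delta$. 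The hypothesis $\eps\le\frac1{8(n-1)}$ keeps both means in $[\frac12,\frac34]$, so the divergence on the left is at most a constant multiple of $\eps^2$.

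I would then sum over $i$ and read off the target, and it is here that the factor $\beta$ enters, through $\delta\le\frac{k-m}{k}(1-\beta)$. Restricting to $\+J\defeq\set{i\ge2:a_i\ge\delta}$ drops at most $(k-1)\delta$ of the mass, so
\[
    \sum_{i\in\+J}a_i\ \ge\ (k-m-\delta)-(k-1)\delta\ =\ k-m-k\delta\ \ge\ (k-m)\beta .
\]
Summing the change-of-measure bounds over $\+J$, consolidating the right-hand side by convexity of $x\mapsto d(x,\delta)$ (Jensen), and applying an elementary lower bound of the form $d(x,\delta)\ge c\,x\log\frac{x}{\delta}$ valid once $x/\delta$ is bounded away from $1$ (which the same $\delta$-hypothesis guarantees), I would obtain
\[
    d\tp{\ldots}\cdot\E[\rho_1]{T}\ \ge\ \sum_{i\in\+J}d(a_i,\delta)\ \ge\ c'\,\beta\,(k-m-\delta)\log\frac{k-m-\delta}{(k-1)\delta}.
\]
Dividing by the $O(\eps^2)$ divergence then gives the claimed inequality, with the absolute constant $\frac1{32}$ emerging from the product of the divergence constant and the convexity/elementary-bound constant $c'$.

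The main obstacle is this last, quantitative, step: tracking the absolute constants through the Jensen consolidation and the elementary bound on $d(x,\delta)$ so that both the factor $\beta$ and the constant $\frac1{32}$ come out exactly, while correctly handling the arms whose discard probability $a_i$ only marginally exceeds $\delta$---these carry mass toward $\sum_{i\in\+J}a_i$ but contribute little divergence, so the consolidation must be arranged so as not to lose them. A secondary point to get right is the interpretive claim that the PAC guarantee forces retaining the \emph{true} best arm; this relies on the second-best arm sitting at gap exactly $\eps$, so the convention for ``$\eps$-optimal'' must be the one that excludes it.
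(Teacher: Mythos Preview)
Your plan coincides with the paper's proof: the same Garivier--Kaufmann--Capp\'e change of measure on the event ``$\arm_i$ is retained/discarded'', the same $16\eps^2$ upper bound on the left-hand divergence (this is where $\eps\le\tfrac{1}{8(n-1)}$ enters), and a Jensen consolidation of the right-hand sides. The obstacle you flag at the end is exactly where the paper's argument differs in execution, and it resolves it by two devices that avoid your restriction to $\+J$ altogether. First, rather than dropping the arms with $a_i<\delta$ and bounding the lost mass by $(k-1)\delta$, the paper works with retention probabilities and applies the following strengthened Jensen: if $x_2,\dots,x_k\in[0,1]$ have average $a<b$ then $\sum_{i:x_i<b}d(x_i,b)\ge(k-1)\,d(a,b)$; with $x_i=\Pr[\rho_1]{\tau_i}$ and $b=1-\delta$ this yields $(k-1)\,d\bigl(\tfrac{m-1+\delta}{k-1},\,1-\delta\bigr)=(k-1)\,d\bigl(\tfrac{k-m-\delta}{k-1},\,\delta\bigr)$ directly, with no mass lost and the monotonicity restriction absorbed into the lemma's index set. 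Second, the ``elementary bound'' you invoke is made precise as the Tops{\o}e-type inequality $d(p,q)\ge\tfrac{r}{2+2r}\,p\log\tfrac{p}{q}$ for $p>q$ and $r=(p-q)/q$; writing $\delta=\tfrac{k-m}{k}(1-\alpha)$ with $\alpha\ge\beta$ gives $r>\tfrac{\alpha}{1-\alpha}$, hence $\tfrac{r}{2+2r}\ge\tfrac{\alpha}{2}\ge\tfrac{\beta}{2}$, and the product $\tfrac{1}{16}\cdot\tfrac{\beta}{2}$ delivers the stated $\tfrac{\beta}{32}$. Your closing caveat about the $\eps$-optimal convention is legitimate: the argument needs that in each $\rho_i$ the only $\eps$-best arm is $\arm_i$, i.e., the strict reading of ``$\eps$-best''.
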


The same sample complexity lower bound has been proved in ~\cite{CHZ24} for a similar hard instance family and their proof can be adapted to our hard instances. We will provide a proof of \Cref{lem:bar-lb} in \Cref{sec:bar-sc-app} for the sake of self-containment.

\subsection{The proof of the lower bounds}

For every instance $\nu\in \@I\cup \@I'$, the execution of an algorithm can be divided into two stages: rounds before reading the $(k+1)$-th arm and the rounds of and after reading the $(k+1)$-th arm. We call them stage one and stage two respectively. We use $L_1$ and $L_2$ to denote the number of rounds for the two stages and use $R_1$ and $R_2$ to denote the regret incurred in the two stages respectively. Therefore $L_1+L_2=T$. Do not confuse the notations here with ones in the upper bound proof since we are dealing with any algorithm here.

Our lower bounds proof is by formalizing the following dilemma faced by each algorithm: For instances in $\@I$,
\begin{itemize}
    \item $L_1$ cannot be too large, otherwise, their counterparts in $\@I'$ will incur large $R_1$, 
    \item and $L_1$ cannot be too small either since otherwise the algorithm cannot identify the best arm among the first $k$ arms, which will cause $R_2$ large for some instances.
\end{itemize}
For the second point above, we reduce the lower bound to the sample complexity of the best arm retention problem.

Let $f(k,m)=\frac{2}{16^{\alpha+1}}\cdot\frac{k-m+1}{k^{\frac{1}{\alpha+1}}}$. If there exists an algorithm $\+A$ with regret at most $\frac{2}{16^{\alpha+1}}\cdot\frac{(k-m+1)T^{\frac{1}{\alpha+1}}}{k^{1 + \frac{1}{\alpha+1}}} \cdot \allowbreak\sum_{i\colon \Delta_i>0} \Delta_i^{1-2\alpha}$, since $\alpha \ge 1$, then it holds that 
\begin{align*}
    \E[\nu_1]{R(T)} 
    &\le \frac{2}{16^{\alpha+1}}\cdot\frac{(k-m+1)T^{\frac{1}{\alpha+1}}}{k^{1 + \frac{1}{\alpha+1}}} \tp{(k-1)\eps^{1-2\alpha} + (n-k)(n\eps)^{1-2\alpha}}\\
    &\le \frac{2}{16^{\alpha+1}}\cdot\frac{(k-m+1)T^{\frac{1}{\alpha+1}}}{k^{1 + \frac{1}{\alpha+1}}}\cdot k\eps^{1-2\alpha} = f(k,m)\cdot T^{\frac{1}{\alpha+1}}\cdot\eps^{1-2\alpha},
\end{align*}
and for any $2\le i\le k$,
\begin{align*}
    \E[\nu_i]{R(T))} &\le \frac{2}{16^{\alpha+1}}\cdot\frac{(k-m+1)T^{\frac{1}{\alpha+1}}}{k^{1 + \frac{1}{\alpha+1}}} \tp{\eps^{1-2\alpha} + (k-2)(2\eps)^{1-2\alpha} + (n-k)((n+1)\eps)^{1-2\alpha}} \\
    &\le \frac{2}{16^{\alpha+1}}\cdot\frac{(k-m+1)T^{\frac{1}{\alpha+1}}}{k^{1 + \frac{1}{\alpha+1}}}\cdot k\eps^{1-2\alpha} = f(k,m)\cdot T^{\frac{1}{\alpha+1}}\cdot \eps^{1-2\alpha},
\end{align*}
Similarly for any $\nu'\in \@I'$ and sufficiently large $T$,
\begin{align*}
    \E[\nu']{R(T)} 
    &\le \frac{2}{16^{\alpha+1}}\cdot\frac{(k-m+1)T^{\frac{1}{\alpha+1}}}{k^{1 + \frac{1}{\alpha+1}}}\cdot k\tp{\frac{1}{2} - (n+1)\eps}^{1-2\alpha} \\
    &\leq  \frac{2}{16^{\alpha+1}}\cdot\frac{(k-m+1)T^{\frac{1}{\alpha+1}}}{k^{\frac{1}{\alpha+1}}}\cdot \tp{\frac{1}{4}}^{1-2\alpha} = \frac{1}{32}\cdot f(k,m)\cdot T^{\frac{1}{\alpha+1}}.       
\end{align*}

The above discussions are summarized below.
\begin{lemma}\label{lem:regret-ub}
If there exists an algorithm $\+A$ with regret at most $\frac{2}{16^{\alpha+1}}\cdot\frac{(k-m+1)T^{\frac{1}{\alpha+1}}}{k^{1 + \frac{1}{\alpha+1}}} \sum_{i\colon \Delta_i>0} \Delta_i^{1-2\alpha}$ on each instance in $\@I\cup\@I'$, then for every $\nu\in \@I$, $\nu'\in \@I'$, 
\begin{equation}\label{eqn:small-regret}
    \E[\nu]{R(T)}\le f(k,m)T^{\frac{1}{\alpha+1}}\eps^{1-2\alpha},\;\E[\nu']{R(T)}\le \frac{1}{32}\cdot f(k,m)\cdot T^{\frac{1}{\alpha+1}}
\end{equation}
where the randomness in the expectation is from both the (possible) randomness of $\+A$ and the randomness of the instance. 
\end{lemma}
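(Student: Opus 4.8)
The plan is to obtain the lemma by a routine per-instance computation: for every $\nu\in\@I\cup\@I'$ I would identify the optimal arm, read off the gap vector, bound $\sum_{i\colon\Delta_i>0}\Delta_i^{1-2\alpha}$, and substitute this bound into the hypothesized regret guarantee for $\+A$. The one structural fact used throughout is that $\alpha\ge 1$, so $1-2\alpha\le -1<0$ and $x\mapsto x^{1-2\alpha}$ is strictly decreasing on $(0,\infty)$; in particular $c^{1-2\alpha}\le 1$ for $c\ge 1$, so the smallest positive gap controls the gap-sum and the many arms with large gaps are cheap.

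For $\nu\in\@I$ I would first record the gaps. In $\nu_1$ the optimal arm is $\arm_1$ (mean $\frac{1}{2}+n\eps$), so the nonzero gaps are $\eps$ with multiplicity $k-1$ and $n\eps$ with multiplicity $n-k$; in $\nu_i$ for $2\le i\le k$ the optimal arm is $\arm_i$ (mean $\frac{1}{2}+(n+1)\eps$), so the nonzero gaps are $\eps$ once, $2\eps$ with multiplicity $k-2$, and $(n+1)\eps$ with multiplicity $n-k$. (That these means are legitimate Bernoulli parameters and that the maximizer is unique both follow since $\eps\to 0$ as $T\to\infty$.) I would then absorb the ``bulk'' contributions into the $\eps^{1-2\alpha}$ term: since $n-k\le n\le n^{2\alpha-1}$ we have $(n-k)(n\eps)^{1-2\alpha}=(n-k)\,n^{1-2\alpha}\,\eps^{1-2\alpha}\le\eps^{1-2\alpha}$, and in the same way $(n-k)((n+1)\eps)^{1-2\alpha}\le\eps^{1-2\alpha}$ and $(k-2)(2\eps)^{1-2\alpha}\le(k-2)\eps^{1-2\alpha}$. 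Summing up, $\sum_{i\colon\Delta_i>0}\Delta_i^{1-2\alpha}\le k\,\eps^{1-2\alpha}$ for every $\nu\in\@I$; feeding this into the assumed bound and using $f(k,m)=\frac{2}{16^{\alpha+1}}\cdot\frac{k-m+1}{k^{1/(\alpha+1)}}$ gives $\E[\nu]{R(T)}\le f(k,m)\,T^{\frac{1}{\alpha+1}}\eps^{1-2\alpha}$, the first inequality of~(\ref{eqn:small-regret}).

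For $\nu'\in\@I'$ the optimal arm is one of the last $n-k$ arms (reward $\Ber{1}$), so at most $k$ arms have a positive gap and each such gap is at least $\frac{1}{2}-(n+1)\eps$. Since $\eps=\frac{1}{4}(k/T)^{1/(2+2\alpha)}\to 0$, for $T$ large enough we have $\frac{1}{2}-(n+1)\eps\ge\frac{1}{4}$, hence every nonzero gap satisfies $\Delta_i^{1-2\alpha}\le(\frac{1}{4})^{1-2\alpha}$ and $\sum_{i\colon\Delta_i>0}\Delta_i^{1-2\alpha}\le k\,(\frac{1}{4})^{1-2\alpha}$. Substituting into the assumed bound and using $\frac{2}{16^{\alpha+1}}\cdot(\frac{1}{4})^{1-2\alpha}=\frac{1}{32}$ gives $\E[\nu']{R(T)}\le\frac{1}{32}\,f(k,m)\,T^{\frac{1}{\alpha+1}}$, the second inequality of~(\ref{eqn:small-regret}). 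The clause about the expectation covering both the algorithm's internal coins and the reward realizations needs no separate argument, since every estimate above is already an expectation over exactly those sources.

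I do not expect a genuine obstacle: the statement is essentially a repackaging of the per-instance gap computations into a clean handle for the subsequent reduction to the sample complexity of best arm retention. The only two points that deserve a line of care are (i) that the bulk arms -- gaps of order $n\eps$ or $(n+1)\eps$, with multiplicity up to $n-k$ -- do not dominate, which is exactly where $\alpha\ge 1$ is used, via $n-k\le n^{2\alpha-1}$; and (ii) that ``$T$ sufficiently large'' enters the $\@I'$ case to guarantee $\frac{1}{2}-(n+1)\eps\ge\frac{1}{4}$. Everything else is substitution and arithmetic.
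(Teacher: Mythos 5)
Your proof is correct and follows essentially the same route as the paper's own derivation: identify the best arm of each instance, bound $\sum_{i:\Delta_i>0}\Delta_i^{1-2\alpha}$ by $k\,\eps^{1-2\alpha}$ for the unprimed instances and by $k\cdot(1/4)^{1-2\alpha}$ for the primed ones (using $\alpha\ge 1$ to absorb the $n-k$ large-gap arms), and substitute into the hypothesized guarantee. One caveat you share with the paper: in the primed case the identity $\frac{2}{16^{\alpha+1}}\cdot(1/4)^{1-2\alpha}=\frac{1}{32}$ actually yields the bound $\frac{1}{32}\cdot\frac{(k-m+1)}{k^{1/(\alpha+1)}}\cdot T^{\frac{1}{\alpha+1}}$, which exceeds the stated $\frac{1}{32}\cdot f(k,m)\cdot T^{\frac{1}{\alpha+1}}$ by the factor $16^{\alpha+1}/2$ hidden inside $f(k,m)$ --- an inherited constant-bookkeeping slip in the paper rather than a defect of your argument, though it does mean the displayed inequality for $\nu'$ is not literally what the computation proves.
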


Provided the upper bound on the regret, we now show that $L_1$ cannot be too large.

\begin{lemma}\label{lem:L1-small}
    If there exists an algorithm $\+A$ satisfying \cref{eqn:small-regret}, then for \emph{every} $i\in [k]$,
    $
    \E[\nu_i]{L_1} \le  \frac{1}{8}f(k,m) T^{\frac{1}{\alpha+1}}.
    $
\end{lemma}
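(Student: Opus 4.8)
The plan is to argue by contradiction using the "counterpart" instances in $\@I'$. Fix $i\in[k]$ and suppose $\E[\nu_i]{L_1} > \frac{1}{8}f(k,m)T^{\frac{1}{\alpha+1}}$. The key observation is that during stage one (the rounds before reading the $(k+1)$-th arm), the algorithm has seen only the first $k$ arms, and these arms are \emph{identical} in $\nu_i$ and $\nu_i'$. Hence the distribution of everything observable by the algorithm up to the moment it reads the $(k+1)$-th arm is the same under $\nu_i$ and $\nu_i'$; in particular $L_1$ has the same distribution under both instances, so $\E[\nu_i']{L_1} = \E[\nu_i]{L_1} > \frac{1}{8}f(k,m)T^{\frac{1}{\alpha+1}}$.

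Now I would lower-bound $\E[\nu_i']{R_1}$, the regret incurred by $\+A$ on $\nu_i'$ during stage one. In $\nu_i'$ the last $n-k$ arms have mean reward $1$, which is far larger than the first $k$ arms whose means are around $\frac12$. Since in stage one the algorithm can only pull among the first $k$ arms, every one of the $L_1$ pulls in stage one incurs a per-round gap of at least $\frac12 - (n+1)\eps \ge \frac14$ (using $\eps = \frac14(k/T)^{1/(2+2\alpha)}$ small for $T$ large, so that $(n+1)\eps \le \frac14$). Therefore $\E[\nu_i']{R(T)} \ge \E[\nu_i']{R_1} \ge \frac14 \E[\nu_i']{L_1} > \frac{1}{32}f(k,m)T^{\frac{1}{\alpha+1}}$, which contradicts the second bound in \cref{eqn:small-regret}. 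This forces $\E[\nu_i]{L_1}\le \frac18 f(k,m)T^{\frac{1}{\alpha+1}}$, as claimed.

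The only subtlety worth checking carefully is the indistinguishability claim: $L_1$ is a stopping-time-like quantity determined by the algorithm's (possibly randomized) decisions, and I must ensure these decisions in stage one depend only on observations from the first $k$ arms — this is exactly the single-pass streaming assumption, since arms $k+1,\dots,n$ have not yet arrived. One should also verify that "stage one" is well defined on $\nu_i'$ in the same way (the algorithm reads arms in the common stream order, so the event "about to read arm $k+1$" is determined identically). Given these structural facts, the argument is immediate; the main obstacle is simply stating the coupling between $\nu_i$ and $\nu_i'$ precisely, and making sure the threshold constants line up ($\frac14 \cdot \frac18 = \frac{1}{32}$ matches the $\@I'$ regret bound exactly, with no slack to spare, so the inequality $(n+1)\eps\le\frac14$ must be used, not merely $\le\frac12$).
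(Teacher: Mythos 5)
Your proposal is correct and follows essentially the same route as the paper: both arguments use that stage one only observes the first $k$ arms (identical in $\nu_i$ and $\nu_i'$) to conclude $\E[\nu_i']{L_1}=\E[\nu_i]{L_1}$, then lower-bound the stage-one regret on $\nu_i'$ by $\left(\tfrac{1}{2}-(n+1)\eps\right)\E[\nu_i']{L_1}\ge\tfrac14\E[\nu_i']{L_1}$ and compare with the $\tfrac{1}{32}f(k,m)T^{\frac{1}{\alpha+1}}$ bound from \cref{eqn:small-regret}. The only cosmetic difference is that you phrase it as a contradiction while the paper derives the bound directly; the constants line up identically.
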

\begin{proof}
    Note that the random variable $L_1$ only depends on the performance of the first $k$ arms in the stream, and the first $k$ arms of $\nu_i$ are the same as that of $\nu_i'$. Therefore $\E[\nu_i']{L_1} = \E[\nu_i]{L_1}$. Each pull of the first $k$ arms in $\nu_i'$ incurs a regret at least $\frac{1}{2}-(n+1)\eps$. So we have
    \[
        \E[\nu_i']{L_1}\cdot \tp{\frac{1}{2}-(n+1)\eps}\le \E[\nu_i']{R}\le \frac{1}{32}\cdot f(k,m)\cdot T^{\frac{1}{\alpha+1}}.
    \]
    This implies that
    \[
    \E[\nu_i]{L_1}=\E[\nu_i']{L_1}\le \frac{\frac{1}{32}f(k,m)T^{\frac{1}{\alpha+1}}}{(1/2-(n+1)\eps)}\le \frac{1}{8}\cdot f(k,m)\cdot T^{\frac{1}{\alpha+1}},
    \]
    for sufficiently large $T$.
\end{proof}

On the other hand, we prove the following lemma, justifying that for some $\nu_i$, $\E[\nu_i]{L_1}$ cannot be small provided the algorithm has small regret. 

\begin{lemma}\label{lem:L1-large}
    If there exists an algorithm $\+A$ satisfying \cref{eqn:small-regret}, then,
    $
        \E[\nu_1]{L_1} > 8\cdot f(k,m)\cdot T^{\frac{1}{\alpha+1}}.
    $
\end{lemma}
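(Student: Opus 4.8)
The plan is to argue by contradiction: if $\E[\nu_1]{L_1} \le 8 f(k,m) T^{1/(\alpha+1)}$, then the algorithm does not sample the first $k$ arms enough during stage one to reliably retain the best arm among them, so it can be turned into a cheap $(\eps,\delta)$-PAC algorithm for the BAR problem on $\@I_0$, contradicting the sample-complexity lower bound of Lemma \ref{lem:bar-lb}.

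First I would set up the reduction. The number of times the first $k$ arms are pulled during stage one (before arm $k+1$ is read) depends only on the prefix of the stream consisting of those $k$ arms, whose reward distributions are exactly $\rho_i = \nu_i[1:k]$; hence running $\+A$ on $\nu_i$ and recording which (up to) $m$ of the first $k$ arms survive into the memory at the moment arm $k+1$ is read gives a BAR algorithm on $\@I_0$ whose sample count is $L_1$. I would then quantify its PAC guarantee: on instance $\nu_1\in\@I$ (equivalently $\rho_1$), if the best arm $\arm_1$ were discarded before stage two with probability $\ge \delta$ for a suitable $\delta$, then in the companion instances $\nu_i$ the regret in stage two would be forced to be large — because stage two has $L_2 = T - L_1 \ge T/2$ rounds, and every round in stage two on an instance where the retained arms all have gap $\ge \eps$ (relative to the absent optimum) costs $\Omega(\eps)$; the bound \eqref{eqn:small-regret} on $\E[\nu]{R(T)}$ caps this, which translates into an upper bound on $\delta$, say $\delta \le \frac{k-m}{k}(1-\beta)$ for a constant $\beta$. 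With $\eps = \frac14 (k/T)^{1/(2+2\alpha)}$ one checks $\eps \le \frac{1}{8(n-1)}$ for large $T$, so Lemma \ref{lem:bar-lb} applies and gives $\E[\rho_1]{L_1} = \E[\nu_1]{L_1} \ge \frac{\beta}{32}\cdot \frac{k-m-\delta}{\eps^2}\log\frac{k-m-\delta}{(k-1)\delta} = \Omega\!\left((k-m)\eps^{-2}\right)$, using that $\delta$ is bounded away from $k-m$ and the log is $\Omega(1)$.

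Finally I would compare the two bounds. The lower bound is $\Omega\!\big((k-m+1)\eps^{-2}\big)$, and $\eps^{-2} = 16 (T/k)^{1/(\alpha+1)}$, so $\E[\nu_1]{L_1} = \Omega\!\big((k-m+1) T^{1/(\alpha+1)} k^{-1/(\alpha+1)}\big)$, which is $\omega\!\big(f(k,m) T^{1/(\alpha+1)}\big)$ — recall $f(k,m) = \frac{2}{16^{\alpha+1}}\cdot\frac{k-m+1}{k^{1/(\alpha+1)}}$, so up to the tiny constant $\frac{2}{16^{\alpha+1}}$ the two expressions have the same $k,m,T$ dependence, and for sufficiently large $T$ the lower bound exceeds $8 f(k,m) T^{1/(\alpha+1)}$. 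This contradicts the assumed upper bound on $\E[\nu_1]{L_1}$, proving the lemma.

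The main obstacle I anticipate is the middle step: carefully choosing $\delta$ and the constant $\beta$, and showing rigorously that "$\arm_1$ discarded with probability $\ge \delta$ on $\rho_1$" forces a regret violation of \eqref{eqn:small-regret} on some $\nu_i\in\@I$. This needs an averaging argument over $i\in\{1,\dots,k\}$ (or a symmetry/indistinguishability argument among the $\rho_i$, exploiting that $\rho_2,\dots,\rho_k$ look alike to any algorithm that has not sampled enough), combined with the fact that once the optimal arm is gone, stage two pays $\Theta(\eps)$ per round for $\Theta(T)$ rounds; matching this $\Theta(\eps T)$ against the regret budget $f(k,m)T^{1/(\alpha+1)}\eps^{1-2\alpha}$ is what pins down the admissible $\delta$. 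I would also need to double-check that the $\@I'$ instances are only used to control $L_1$ (already done in Lemma \ref{lem:L1-small}) and play no role here, so that the BAR reduction is clean.
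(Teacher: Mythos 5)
Your overall strategy --- turn stage one of $\+A$ into a BAR algorithm on $\@I_0$, bound its failure probability $\delta$ using the stage-two regret, and invoke \Cref{lem:bar-lb} --- is exactly the paper's route. But two steps need repair. The step you flag as the ``main obstacle'' is simpler than you anticipate: no averaging or indistinguishability argument over the $\rho_i$ is needed. Since \cref{eqn:small-regret} is assumed on \emph{every} $\nu_i\in\@I$, you bound $\Pr[\nu_i]{\tau_i}$ (the event that arm $i$ is out of memory when stage two begins) separately for each $i$: on $\nu_i$ arm $i$ is optimal and every other arm has gap at least $\eps$, so each stage-two pull after discarding it costs at least $\eps$; combined with $\E[\nu_i]{L_2}\ge T-\frac{1}{8}f(k,m)T^{\frac{1}{\alpha+1}}\ge \frac{T}{4}$ from \Cref{lem:L1-small}, the budget $f(k,m)T^{\frac{1}{\alpha+1}}\eps^{1-2\alpha}$ forces $\Pr[\nu_i]{\tau_i}\le 4f(k,m)T^{-\frac{\alpha}{\alpha+1}}\eps^{-2\alpha}=\frac{1}{2}\cdot\frac{k-m+1}{k}$ for every $i$, which is the PAC guarantee you need. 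Note also that the reduction retains only $m-1$ arms out of $k$ (one slot must be freed for the $(k+1)$-th arm), which is why $k-m+1$ rather than $k-m$ appears in the bound; this detail is essential when $k=m$, where your version of the reduction would make \Cref{lem:bar-lb} vacuous.

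The second problem is your final comparison, whose stated justification would fail. The BAR lower bound $\Theta\tp{(k-m+1)\eps^{-2}}=\Theta\tp{(k-m+1)(T/k)^{\frac{1}{\alpha+1}}}$ has \emph{identical} $T$-dependence to the target $8f(k,m)T^{\frac{1}{\alpha+1}}$; it is not $\omega$ of it, and ``for sufficiently large $T$'' buys nothing here. The inequality holds purely through constants: with $\delta\le\frac{1}{2}\cdot\frac{k-m+1}{k}$ and $\eps^{-2}=16(T/k)^{\frac{1}{\alpha+1}}$, \Cref{lem:bar-lb} (with $\beta=\frac12$) gives $\E[\nu_1]{L_1}\ge \tp{\frac{2k-1}{k}\log\frac{2k-1}{k-1}\cdot 16^{\alpha}}f(k,m)T^{\frac{1}{\alpha+1}}$, and one checks $\frac{2k-1}{k}\log\frac{2k-1}{k-1}\cdot 16^{\alpha}>8$. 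This is precisely why $f(k,m)$ carries the deliberately small factor $2\cdot 16^{-(\alpha+1)}$; the constant must be tracked explicitly rather than absorbed into asymptotics.
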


Clearly \Cref{thm:lb} holds by combining \Cref{lem:regret-ub}, \Cref{lem:L1-small} and \Cref{lem:L1-large}.

\bigskip
The remaining part of the section devotes to a proof of \Cref{lem:L1-large}. For every $i\in [k]$, we let $\tau_i$ be the event that ``the arm $i$ is not in the memory at the beginning of stage two''. We now show that for every $\nu_i\in \@I$, provided the algorithm $\+A$ has small regret, $\Pr[\nu_i]{\tau_i}$ is small. 

\begin{lemma}
    For every $i\in [k]$ and sufficiently large $T$, it holds that $\Pr[\nu_i]{\tau_i}\le \frac{1}{2}\cdot\frac{k-m+1}{k}$.
\end{lemma}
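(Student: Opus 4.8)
The plan is to bound $\Pr[\nu_i]{\tau_i}$ by reducing to the sample complexity lower bound for best arm retention (\Cref{lem:bar-lb}), since $\tau_i$ is exactly the failure event ``the best arm $\!{arm}_i$ of the truncated instance $\rho_i=\nu_i[1:k]$ is discarded''. First I would observe that at the beginning of stage two the algorithm has stored at most $m$ arms among the first $k$, so running $\+A$ and then declaring the stored $m$ arms to be the ``retained'' set gives a BAR algorithm on the instance family $\@I_0=\set{\rho_i}$; the event that $\!{arm}_i$ is retained for $\rho_i$ is exactly $\ol{\tau_i}$. I would note that $\rho_i$ is an instance on which $\!{arm}_i$ is the \emph{unique} best arm with gap $2\eps$ to every other non-optimal arm, so retaining $\!{arm}_i$ certifies an $\eps$-best arm (here $\eps\le \frac{1}{8(n-1)}$ holds for $T$ large since $\eps=\frac14(k/T)^{1/(2+2\alpha)}\to 0$). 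Thus $\+A$, viewed as a BAR algorithm with sample budget equal to its stage-one length $L_1$, has failure probability $\delta_i=\Pr[\rho_i]{\tau_i}=\Pr[\nu_i]{\tau_i}$ (the first $k$ arms of $\nu_i$ and $\rho_i$ induce the same distribution over stage-one behavior).

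Next I would argue by contradiction: suppose $\Pr[\nu_i]{\tau_i} > \frac12\cdot\frac{k-m+1}{k}$ for some $i$. I would like to instead apply \Cref{lem:bar-lb} in the regime where $\delta$ is \emph{small}, so the natural move is to run $\+A$ and then, if necessary, pad it into a genuine $(\eps,\delta)$-PAC BAR algorithm with the required $\delta\le \frac{k-m}{k}(1-\beta)$; more precisely, I expect the cleaner route is the contrapositive — assume for contradiction that $\Pr[\nu_i]{\tau_i}$ is \emph{large} and derive from the regret bound that $\E[\nu_i]{L_1}$ must be large (contradicting \Cref{lem:L1-small}), or conversely use \Cref{lem:L1-small}'s upper bound on $\E[\nu_1]{L_1}$ together with Markov / an averaging argument over $i\in[k]$ to force the retention probability down. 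Concretely: by \Cref{lem:L1-small}, $\sum_{i\in[k]}\E[\nu_i]{L_1}$ is small; if many $\tau_i$ had large probability, the corresponding $\E[\nu_i]{L_1}$ would have to be large by \Cref{lem:bar-lb} (with $\delta$ set just below the threshold $\frac{k-m}{k}$, giving $\E[\rho_i]{L_1}\gtrsim \frac{k-m}{\eps^2}$, which with $\eps^2 = \frac{1}{16}(k/T)^{1/(1+\alpha)}$ is $\gtrsim (k-m)\cdot 16 \cdot (T/k)^{1/(1+\alpha)}$, comfortably exceeding $\frac18 f(k,m)T^{1/(\alpha+1)}$ once the constants $16^{-(\alpha+1)}$ are tracked). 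Turning this into the stated per-$i$ bound $\frac12\cdot\frac{k-m+1}{k}$ is then bookkeeping with the constant $\beta$.

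The main obstacle I anticipate is the direction of the quantifiers: \Cref{lem:bar-lb} gives a sample-complexity lower bound \emph{only} for algorithms that are $(\eps,\delta)$-PAC for the prescribed small $\delta$, whereas here I have a single algorithm with a possibly large per-instance failure probability, run on one particular input $\rho_1$. Bridging this requires either (i) a ``confidence-boosting''/repetition wrapper that is awkward in the streaming single-pass model, or more likely (ii) directly inspecting the change-of-measure / KL argument underlying \Cref{lem:bar-lb} at the single instance $\rho_i$ versus its neighbors in $\@I_0$, so that a large $\Pr[\nu_i]{\tau_i}$ for the specific $i$ already forces $\E[\nu_i]{L_1}=\E[\rho_i]{L_1}$ to be $\Omega\!\big(\eps^{-2}\log(1/\delta)\big)$, contradicting \Cref{lem:L1-small}. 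I would therefore structure the proof to extract from the proof of \Cref{lem:bar-lb} the instance-wise statement ``if $\!{arm}_i$ is discarded on $\rho_i$ with probability $>\frac12\cdot\frac{k-m+1}{k}$ then the expected sample count on $\rho_i$ (hence $\E[\nu_i]{L_1}$) exceeds $\frac18 f(k,m)T^{1/(\alpha+1)}$'', and then simply cite \Cref{lem:L1-small} for the contradiction; the remaining calculations (substituting $\eps$, collecting powers of $16$) are routine.
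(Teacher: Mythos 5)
Your proposal has the logical implication running in the wrong direction, and I do not think it can be repaired along the lines you sketch. \Cref{lem:bar-lb} has the form ``if the algorithm is $(\eps,\delta)$-PAC with \emph{small} failure probability $\delta$, then its expected sample count on $\rho_1$ is large.'' It therefore says nothing about an algorithm whose failure probability $\Pr[\nu_i]{\tau_i}$ is \emph{large}, which is exactly the regime you would be in when arguing by contradiction. Concretely, an algorithm that pulls none of the first $k$ arms and discards $\arm_i$ immediately has $\Pr[\rho_i]{\tau_i}=1$ and zero stage-one samples, so the instance-wise statement you hope to extract from the proof of \Cref{lem:bar-lb} (``large $\Pr[\nu_i]{\tau_i}$ forces $\E[\nu_i]{L_1}=\Omega(\eps^{-2}\log(1/\delta))$'') is false. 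This is visible in the underlying change-of-measure inequality (\Cref{lem:KL-pull-lb}): the sample lower bound is proportional to $d\tp{\Pr[\rho_1]{\tau_i},\Pr[\rho_i]{\tau_i}}$, which is large precisely when $\Pr[\rho_i]{\tau_i}$ is \emph{small} while $\Pr[\rho_1]{\tau_i}$ is bounded away from zero; when $\Pr[\rho_i]{\tau_i}$ is large the two retention probabilities may coincide and the bound degenerates to nothing. You have also inverted the architecture of the paper's argument: the present lemma is precisely what \emph{certifies} that $\+A$ induces a PAC algorithm for BAR on $\@I_0$ with $\delta\le\frac{1}{2}\cdot\frac{k-m+1}{k}$, and only afterwards is \Cref{lem:bar-lb} invoked (in the proof of \Cref{lem:L1-large}) to force $\E[\nu_1]{L_1}$ to be large and contradict \Cref{lem:L1-small}. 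Using \Cref{lem:bar-lb} to prove the present lemma would be circular.

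The missing idea is a direct regret computation on $\nu_i$ itself, with no reduction to BAR. In $\nu_i$ the arm $\arm_i$ is the unique best arm and every other arm has gap at least $\eps$, so on the event $\tau_i$ every pull in stage two incurs regret at least $\eps$. By \Cref{lem:L1-small}, $\E[\nu_i]{L_2}=T-\E[\nu_i]{L_1}\ge T-\frac{1}{8}f(k,m)T^{\frac{1}{\alpha+1}}\ge\frac{T}{4}$ for large $T$, whence $\E[\nu_i]{R(T)}\ge\Pr[\nu_i]{\tau_i}\cdot\frac{T}{4}\cdot\eps$. Combining this with the assumed upper bound $\E[\nu_i]{R(T)}\le f(k,m)T^{\frac{1}{\alpha+1}}\eps^{1-2\alpha}$ from \cref{eqn:small-regret} yields $\Pr[\nu_i]{\tau_i}\le 4f(k,m)T^{-\frac{\alpha}{\alpha+1}}\eps^{-2\alpha}$, which equals $\frac{1}{2}\cdot\frac{k-m+1}{k}$ after substituting $\eps=\frac{1}{4}\tp{k/T}^{\frac{1}{2+2\alpha}}$ and $f(k,m)=\frac{2}{16^{\alpha+1}}\cdot\frac{k-m+1}{k^{1/(\alpha+1)}}$. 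Your preliminary observations (that stage one of $\+A$ on $\nu_i$ behaves identically to a BAR algorithm on $\rho_i$, and that $\tau_i$ is the BAR failure event) are correct and are indeed used by the paper, but in the subsequent lemma rather than in this one.
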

\begin{proof}
    For every $i\in [k]$, it follows from \Cref{lem:L1-small} that 
    \[
        \E[\nu_i]{L_2}= T-\E[\nu_i]{L_1}\ge T-\frac{1}{8} f(k,m)T^{\frac{1}{\alpha+1}} \ge \frac{T}{4}.
    \]
    Note that once $\tau_i$ happens on $\nu_i$, each pull in stage two incurs a regret at least $\eps$. Therefore, for every $i\in [k]$,
    \begin{align*}
        \E[\nu_i]{R}
        &\ge \Pr[\nu_i]{\tau_i}\E[\nu_i]{R_2\mid \tau_i} \ge \Pr[\nu_i]{\tau_i}\cdot \frac{T}{4}\cdot \eps.
    \end{align*}
    It then follows from \cref{eqn:small-regret} that for sufficiently large $T$,
    $\Pr[\nu_i]{\tau_i} \le 4f(k,m)T^{-\frac{\alpha}{\alpha+1}} \eps^{-2\alpha} = \frac{1}{2}\cdot\frac{k-m+1}{k}$.
\end{proof}

Now we claim that the algorithm $\+A$ can be used to solve the best arm retention problem with instances in $\@I_0$ that retain $m-1$ arms out of $k$ arms: Given an instance for BAR in $\@I_0$, we put them in a stream and call the algorithm $\+A$. At the end of stage one, we output all the arms in the memory. Note that the memory is $m-1$ instead of $m$ here since we do not count the one retaining the $(k+1)$-th arm in the stream.

Clearly, the algorithm above is a $(\delta,\eps)$-PAC algorithm with $ \delta=\Pr[\nu_i]{\tau_i}\le \frac{1}{2}\cdot\frac{(k-m+1)}{k}$ on these instances. On the other hand, by picking $\beta=\frac{1}{2}$, it follows from \Cref{lem:bar-lb} that
\begin{align*}
    \E[\nu_1]{L_1} &\ge \frac{k-m+1-\delta}{64\eps^2} \log \frac{k-m+1-\delta}{(k-1)\delta} \ge \frac{(k-m+1)-\frac{(k-m+1)}{2k}}{64\eps^2} \log \frac{(k-m+1) - \frac{(k-m+1)}{2k}}{(k-1)\cdot\frac{(k-m+1)}{2k}} \\
    &= \tp{\frac{2k-1}{k}\log \frac{2k-1}{k-1}\cdot 16^\alpha} f(k,m)T^{\frac{1}{\alpha+1}} > 8\cdot f(k,m)\cdot T^{\frac{1}{\alpha+1}}.
\end{align*}

%
%

\paragraph{Acknowledgement} 
The authors would like to thank Yuchen He for the help at various stages of the work.

%
%
\bibliographystyle{alpha}
\bibliography{arxiv_version}






\appendix
\onecolumn

\section{Omitted Proofs in \texorpdfstring{\Cref{sec:ub}}{}}
\subsection{Technical preliminaries} \label{sec:ub-prelim}

\begin{lemma}[Hoeffding's inequality, \cite{MU17}]\label{lem:Hoef-ineq}
    Let $X_1,\cdots,X_n$ be $n$ independent random variables defined on a common probability space and taking values in $[a,b]$. Define $X \defeq \sum_{i=1}^n X_i$, then for any $s>0$,
    $$
        \Pr{X-\E{X} \geq s} \leq \exp \tp{-\frac{2s^2}{n(b-a)^2}}.
    $$
\end{lemma}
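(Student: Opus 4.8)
The plan is to establish the inequality by the Cram\'er--Chernoff method, whose only nonelementary ingredient is a bound on the moment generating function of a bounded, centered random variable (Hoeffding's lemma).

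First I would fix a parameter $\lambda>0$ and apply Markov's inequality to the nonnegative random variable $e^{\lambda(X-\E{X})}$:
\[
    \Pr{X-\E{X}\ge s}=\Pr{e^{\lambda(X-\E{X})}\ge e^{\lambda s}}\le e^{-\lambda s}\,\E{e^{\lambda(X-\E{X})}}.
\]
Because the $X_i$ are independent, the moment generating function factorizes, $\E{e^{\lambda(X-\E{X})}}=\prod_{i=1}^{n}\E{e^{\lambda(X_i-\E{X_i})}}$, so it suffices to bound a single factor.

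Second I would prove Hoeffding's lemma: if $Y$ has $\E{Y}=0$ and takes values in an interval $[c,d]$, then $\E{e^{\lambda Y}}\le e^{\lambda^2(d-c)^2/8}$. By convexity of $y\mapsto e^{\lambda y}$ on $[c,d]$ one has $e^{\lambda y}\le\frac{d-y}{d-c}e^{\lambda c}+\frac{y-c}{d-c}e^{\lambda d}$; taking expectations and using $\E{Y}=0$ gives $\E{e^{\lambda Y}}\le e^{\psi(\lambda)}$ with $\psi(\lambda)=\lambda c+\log\!\tp{(1-p)+p\,e^{\lambda(d-c)}}$ and $p=\frac{-c}{d-c}\in[0,1]$. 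A short computation shows $\psi(0)=0$, $\psi'(0)=0$, and $\psi''(\lambda)=(d-c)^2\,\rho(1-\rho)$ for some $\rho=\rho(\lambda)\in(0,1)$, hence $\psi''(\lambda)\le(d-c)^2/4$; Taylor's theorem with Lagrange remainder then yields $\psi(\lambda)\le\lambda^2(d-c)^2/8$. Applying this with $Y=X_i-\E{X_i}$, which lies in an interval of length $b-a$, gives $\E{e^{\lambda(X_i-\E{X_i})}}\le e^{\lambda^2(b-a)^2/8}$, and combining with the first step,
\[
    \Pr{X-\E{X}\ge s}\le \exp\!\tp{-\lambda s+\frac{n\lambda^2(b-a)^2}{8}}.
\]

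Finally I would optimize the free parameter: the exponent $-\lambda s+\frac{n\lambda^2(b-a)^2}{8}$ is minimized at $\lambda=\frac{4s}{n(b-a)^2}$, where it equals $-\frac{2s^2}{n(b-a)^2}$, which is exactly the asserted bound. The only step beyond routine bookkeeping is Hoeffding's lemma, and within it the estimate $\psi''(\lambda)\le(d-c)^2/4$; the cleanest justification is to note that $\psi''(\lambda)$ equals the variance of a random variable supported on $\{c,d\}$, and any random variable supported in an interval of length $\ell$ has variance at most $\ell^2/4$.
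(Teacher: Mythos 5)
Your proof is correct and complete: the Cram\'er--Chernoff argument, the convexity bound leading to Hoeffding's lemma with the variance estimate $\psi''(\lambda)\le (d-c)^2/4$, and the optimization $\lambda = \frac{4s}{n(b-a)^2}$ all check out and yield exactly the stated bound. The paper does not prove this lemma at all --- it simply cites \cite{MU17} --- and what you have written is the standard textbook derivation, so there is nothing to reconcile.
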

Applying Hoeffding's inequality, we could get the following lemma:

\begin{lemma}
    Let $\arm_1$ and $\arm_2$ be two different arms with reward means of $\mu$ and $\mu+\Delta, \Delta>0$. Suppose we sample each arm $L$ times and obtain empirical mean of $\wh{\mu}_1$ and $\wh\mu_2$, then 
    $$
        \Pr{\wh\mu_1 \ge \wh{\mu}_2} \leq e^{-\frac{L\Delta^2}{2}}.
    $$
\end{lemma}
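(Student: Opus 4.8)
The plan is to prove the bound $\Pr{\wh\mu_1 \ge \wh\mu_2} \le e^{-L\Delta^2/2}$ by reducing the comparison of two empirical means to a single one-sided tail bound and then invoking Hoeffding's inequality (\Cref{lem:Hoef-ineq}). First I would set up notation: let $X_1,\dots,X_L$ be the i.i.d.\ samples of $\arm_1$ (so $\wh\mu_1 = \frac1L\sum_{j} X_j$, each $X_j\in[0,1]$ with mean $\mu$) and $Y_1,\dots,Y_L$ the i.i.d.\ samples of $\arm_2$ (so $\wh\mu_2 = \frac1L\sum_j Y_j$, each $Y_j\in[0,1]$ with mean $\mu+\Delta$), and crucially all $2L$ of these are mutually independent.

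Next I would introduce the difference variables $Z_j \defeq X_j - Y_j$ for $j\in[L]$. These are independent, take values in $[-1,1]$ (an interval of length $2$), and have mean $\E{Z_j} = \mu - (\mu+\Delta) = -\Delta$. Writing $Z \defeq \sum_{j=1}^L Z_j$, we have $\E{Z} = -L\Delta$, and the event $\{\wh\mu_1 \ge \wh\mu_2\}$ is exactly $\{Z \ge 0\}$, i.e.\ $\{Z - \E{Z} \ge L\Delta\}$. Applying \Cref{lem:Hoef-ineq} with $n = L$, $[a,b] = [-1,1]$ so $b-a = 2$, and $s = L\Delta$ gives
\[
\Pr{\wh\mu_1 \ge \wh\mu_2} = \Pr{Z - \E{Z} \ge L\Delta} \le \exp\!\tp{-\frac{2(L\Delta)^2}{L\cdot 4}} = \exp\!\tp{-\frac{L\Delta^2}{2}},
\]
which is the claim.

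There is no real obstacle here; the only thing to be slightly careful about is that Hoeffding's inequality as stated requires a common bounded range for the summands, which is why I pass to the $Z_j\in[-1,1]$ formulation rather than trying to combine two separate concentration bounds (one for $\wh\mu_1$ above its mean, one for $\wh\mu_2$ below its mean) with a union bound — the latter would lose a constant factor in the exponent and a factor of $2$ out front, giving a weaker bound. The single-variable difference trick is what yields the clean exponent $L\Delta^2/2$. One should also note the independence of all $2L$ samples is used to guarantee the $Z_j$ are independent; this holds since $\arm_1$ and $\arm_2$ are distinct arms sampled separately.
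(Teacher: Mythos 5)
Your proof is correct and is essentially identical to the paper's: both pass to the difference variables $X_i - Y_i \in [-1,1]$ with mean $-\Delta$ and apply Hoeffding's inequality (\Cref{lem:Hoef-ineq}) with range $2$ and deviation $s = L\Delta$ to obtain the exponent $-L\Delta^2/2$. Your additional remarks on independence and on why a union-bound alternative would be lossier are sound but go beyond what the paper records.
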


\begin{proof}
Denote the reward of pulling $\arm_1$ and $\arm_2$ in the $i$-th round is $X_i$ and $Y_i$, then
    \begin{align*}
         \Pr{\wh{\mu}_1 \ge \wh{\mu}_2} 
        &= \Pr{\frac{1}{L}\sum_{i=1}^L X_i \ge \frac{1}{L}\sum_{i=1}^L Y_i} \\
        &= \Pr{\sum_{i=1}^L \tp{X_i - Y_i} - (-L\Delta) \ge L\Delta} \\
        \mr{\Cref{lem:Hoef-ineq}}
        &\leq \exp\tp{- \frac{2(L\Delta)^2}{4L}} = e^{- \frac{L\Delta^2}{2}}.
    \end{align*}
\end{proof}

\subsection{The Upper Confidence Bound algorithm} \label{sec:ucb-app}
We provide the description of the UCB algorithm used in \Cref{alg:l-mem} and \Cref{alg:s-mem} here. For more detailed information, please refer to the work of \cite{LS20}.

Given the parameter $\delta$, denote $T_i(t-1)$ as the number of times pulling $\arm_i$ before round $t$, and $\wh{\mu}_i(t-1)$ is the empirical mean of $\arm_i$ before round $t$. We define the upper confidence bound for $\arm_i$ in round $t$ as
$$
    \!{UCB}_i(t-1, \delta) = 
        \begin{cases}
            \infty, \quad & T_i(t-1) = 0 \\
            \wh{\mu}_i(t-1) + \sqrt{\frac{2\log \frac{1}{\delta}}{T_i(t-1)}}  , \quad & otherwise \\
        \end{cases}
$$

\begin{algorithm}[H]
\caption{UCB($\delta$)}
\label{alg:s-mem-app}
\Input{Time horizon $T$, a set of $n$ arms, probability $\delta$.}
    \begin{algorithmic}[1]
        \For{$t = 1,\cdots, T$}
            \State Choose $A_t = \argmax_{i \in [n]} \!{UCB}_i(t-1,\delta)$ and pull it once;
            \State Observe its reward and update all arms' upper confidence bounds;
        \EndFor
    \end{algorithmic}
\end{algorithm}

\section{Omitted Proofs in \Cref{sec:lb}} \label{sec:bar-sc-app}

\subsection{Technique preliminaries}
The Kullback-Leibler (KL) divergence between $\Ber{x}$ and $\Ber{y}$ is defined as
\[
    d(x,y) \defeq x\log\tp{\frac{x}{y}} + (1-x)\log\tp{\frac{1-x}{1-y}},
\]
Here we give some propositions of the KL-divergence.

\begin{proposition} \label{pro:KL-taylor}
    Given $\frac{1}{2} \leq \mu \leq \frac{5}{8}$, $0 < \zeta \leq \frac{1}{8}$, then $d(\mu, \mu+\zeta) \leq 4\zeta^2$.
\end{proposition}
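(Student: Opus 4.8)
The plan is to view $d(\mu,\mu+\zeta)$ as a function of $\zeta$ with $\mu$ held fixed and to Taylor-expand it to second order around $\zeta=0$; this is exactly what the name of the proposition suggests.

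First I would set $f(\zeta) := d(\mu,\mu+\zeta) = \mu\log\frac{\mu}{\mu+\zeta}+(1-\mu)\log\frac{1-\mu}{1-\mu-\zeta}$, which is well-defined and smooth for $\zeta\in[0,\frac{1}{8}]$ since then $\mu+\zeta\in[\frac{1}{2},\frac{3}{4}]\subseteq(0,1)$. A direct differentiation gives $f(0)=0$, $f'(\zeta)=-\frac{\mu}{\mu+\zeta}+\frac{1-\mu}{1-\mu-\zeta}$ so that $f'(0)=0$, and $f''(\zeta)=\frac{\mu}{(\mu+\zeta)^2}+\frac{1-\mu}{(1-\mu-\zeta)^2}$.

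By Taylor's theorem with the Lagrange form of the remainder, for every $\zeta\in(0,\frac{1}{8}]$ there is some $\xi\in(0,\zeta)$ with $f(\zeta)=\frac{1}{2} f''(\xi)\,\zeta^2$. Hence it suffices to establish the uniform bound $f''(\xi)\le 8$ for all $\mu\in[\frac{1}{2},\frac{5}{8}]$ and all $\xi\in(0,\frac{1}{8})$. For the first summand I would simply use $\frac{\mu}{(\mu+\xi)^2}\le\frac{1}{\mu+\xi}\le\frac{1}{\mu}\le 2$. For the second summand I would write $1-\mu=(1-\mu-\xi)+\xi$ and split it as $\frac{1}{1-\mu-\xi}+\frac{\xi}{(1-\mu-\xi)^2}$; since $\mu+\xi\le\frac{5}{8}+\frac{1}{8}=\frac{3}{4}$ we have $1-\mu-\xi\ge\frac{1}{4}$, which gives $\frac{1}{1-\mu-\xi}\le 4$ and $\frac{\xi}{(1-\mu-\xi)^2}\le\frac{1/8}{(1/4)^2}=2$. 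Adding the three bounds yields $f''(\xi)\le 2+4+2=8$, and therefore $d(\mu,\mu+\zeta)=\frac{1}{2} f''(\xi)\zeta^2\le 4\zeta^2$.

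Everything here is elementary and there is no genuine obstacle; the only point requiring a little care is to keep track of the admissible ranges so that the quantities appearing in the denominators — namely $\mu+\xi\ge\frac{1}{2}$ and $1-\mu-\xi\ge\frac{1}{4}$ — stay bounded away from zero throughout, and to note that the three displayed estimates are not tight simultaneously so the bound $8$ is in fact not attained (which is more than we need).
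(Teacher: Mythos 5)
Your proof is correct and rests on the same idea as the paper's: a Taylor expansion of $d(\mu,\mu+\zeta)$ in $\zeta$, exploiting the vanishing of the zeroth- and first-order terms. The only (harmless) difference is in the bookkeeping — you apply the second-order Lagrange remainder to $f(\zeta)=d(\mu,\mu+\zeta)$ directly and bound $f''$ uniformly on the interval, whereas the paper expands each logarithm separately to third order and bounds the cubic remainders; your version is slightly cleaner and all the range checks ($\mu+\xi\ge\tfrac12$, $1-\mu-\xi\ge\tfrac14$) are carried out correctly.
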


\begin{proof}
    Applying Taylor expansion, there exist $\eta_1 \in \tp{0,\frac{\zeta}{\mu}}, \eta_2 \in \tp{-\frac{\zeta}{1 - \mu}, 0}$ such that
    \begin{align*}
        d(\mu, \mu + \zeta)
        &= \mu \cdot \log \frac{\mu}{\mu + \zeta} + (1-\mu)\cdot \log \frac{1-\mu}{1-\mu-\zeta} \\
        &= -\mu\tp{ \frac{\zeta}{\mu} - \frac{1}{2}\cdot\frac{\zeta^2}{\mu^2} + \frac{1}{3(1+ \eta_1)^3}\cdot \frac{\zeta^3}{\mu^3}} \\
        &- (1-\mu)(-\frac{\zeta}{1-\mu} - \frac{1}{2}\frac{\zeta^2}{(1-\mu)^2} - \frac{1}{3(1+ \eta_2)^3}\cdot \frac{\zeta^3}{(1 -\mu)^3}) \\
        &\leq \zeta^2\tp{\frac{1}{2\mu} + \frac{1}{2(1-\mu)}} + \frac{\zeta^3}{3} \tp{\frac{1-\mu}{(1-\mu-\zeta)^3} - \frac{\mu}{(\mu + \zeta)^3}} \\
        &\leq \frac{7}{3}\zeta^2 + \frac{\zeta^2}{24}\cdot \frac{832}{27} \leq 4\zeta^2.
    \end{align*}
\end{proof}

\begin{proposition} \label{pro:KL-conv}
    $d(x,y)$ is convex for $x$ (or $y$) when $y$ (or $x$) is fixed.
\end{proposition}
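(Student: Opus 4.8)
The plan is to establish convexity in each argument separately by a direct second–derivative computation, using that on the open square $(0,1)^2$ the function can be written as $d(x,y)=x\log x-x\log y+(1-x)\log(1-x)-(1-x)\log(1-y)$, which is smooth there.

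First I would fix $y\in(0,1)$ and differentiate in $x$. One computes $\frac{\partial}{\partial x}d(x,y)=\log\frac{x}{1-x}-\log\frac{y}{1-y}$, hence $\frac{\partial^2}{\partial x^2}d(x,y)=\frac1x+\frac1{1-x}>0$ for every $x\in(0,1)$. This shows $x\mapsto d(x,y)$ is (strictly) convex for each fixed $y$.

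Second I would fix $x\in[0,1]$ and differentiate in $y$. Here $\frac{\partial}{\partial y}d(x,y)=-\frac{x}{y}+\frac{1-x}{1-y}$, and therefore $\frac{\partial^2}{\partial y^2}d(x,y)=\frac{x}{y^2}+\frac{1-x}{(1-y)^2}\ge 0$ for every $y\in(0,1)$. This gives convexity of $y\mapsto d(x,y)$ for each fixed $x$, completing the proof of both assertions.

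There is essentially no obstacle here; the only point deserving a word of care is that the displayed derivative formulas are valid on the open interval, so if convexity is wanted up to the boundary $\{0,1\}$ one appeals to continuity of $d$ under the convention $0\log 0=0$. In all the uses of this proposition in the paper (for instance in Proposition~\ref{pro:KL-taylor} and in the KL-divergence computations underlying the best-arm-retention lower bound) the arguments stay strictly inside $(0,1)$, so the open-interval statement is all that is needed.
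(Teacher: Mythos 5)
Your proof is correct and is essentially identical to the paper's: both compute $\frac{\partial^2 d}{\partial x^2}=\frac{1}{x}+\frac{1}{1-x}$ and $\frac{\partial^2 d}{\partial y^2}=\frac{x}{y^2}+\frac{1-x}{(1-y)^2}$ and conclude nonnegativity. Your additional remark about the boundary convention $0\log 0=0$ is a harmless refinement not present in the paper.
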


\begin{proof}
\[
    \frac{\partial^2 d(x,y)}{\partial x^2} = \frac{\partial \log\tp{\frac{x(1-y)}{(1-x)y}}}{\partial x} = \frac{1}{x} + \frac{1}{1-x} \ge 0.
\]
\[
    \frac{\partial^2 d(x,y)}{\partial y^2} = \frac{\partial \tp{-\frac{x}{y} + \frac{1-x}{1-y}}}{\partial y} = \frac{x}{y^2} + \frac{1-x}{(1-y)^2} \ge 0.
\]
\end{proof}

\begin{proposition} \label{pro:KL-ineq}
    $\forall 0 \leq p \leq x \leq y \leq q \leq 1$,
    \[
        d(p,q) \ge d(x,y).
    \]
\end{proposition}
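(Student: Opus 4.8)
The plan is to deduce the inequality by moving the two arguments one at a time: first I would show $d(x,y)\le d(x,q)$, i.e.\ push the \emph{second} argument up from $y$ to $q$ with the first argument held fixed, and then show $d(x,q)\le d(p,q)$, i.e.\ push the \emph{first} argument down from $x$ to $p$ with the second argument held fixed. Chaining these gives $d(x,y)\le d(x,q)\le d(p,q)$, which is exactly the claim. The hypothesis $0\le p\le x\le y\le q\le 1$ is used precisely so that the first move only increases the second argument beyond the diagonal $\{x=y\}$ and the second move only decreases the first argument below it.

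For the first step I would invoke \Cref{pro:KL-conv}: for fixed $x$, the map $y\mapsto d(x,y)$ is convex on $[0,1]$, and its derivative $\partial_y d(x,y)=-\tfrac{x}{y}+\tfrac{1-x}{1-y}=\tfrac{y-x}{y(1-y)}$ vanishes exactly at $y=x$, where $d(x,x)=0$. Hence $y\mapsto d(x,y)$ attains its (global) minimum at $y=x$ and is non-decreasing on $[x,1]$; since $x\le y\le q$, this yields $d(x,y)\le d(x,q)$. Symmetrically, for fixed $q$ the map $x\mapsto d(x,q)$ is convex on $[0,1]$ by \Cref{pro:KL-conv}, and $\partial_x d(x,q)=\log\tfrac{x(1-q)}{q(1-x)}$ vanishes exactly at $x=q$ with $d(q,q)=0$, so $x\mapsto d(x,q)$ is non-increasing on $[0,q]$; since $p\le x\le q$, this yields $d(x,q)\le d(p,q)$. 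Combining the two inequalities finishes the proof. (One can also dispense with convexity and argue directly from the sign of the first derivative, which is the real content here.)

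The only subtlety is the behaviour at the endpoints $p=0$ and $q=1$, where some logarithmic terms are interpreted via the conventions $0\log 0=0$ and $\log\tfrac{1}{0}=+\infty$. When $q=1$ and $x<1$ we have $d(x,1)=+\infty\ge d(p,q)$, so the inequality is trivially true in that range; when $q<1$, the relevant functions extend continuously to $x=0$ (with $d(0,q)=-\log(1-q)=\lim_{x\to0^+}d(x,q)$), so the monotonicity established on the open interval carries over to the closed one. Thus the main — and essentially only — obstacle is the bookkeeping of these boundary conventions; the interior argument is immediate once the diagonal $\{x=y\}$ is identified as the set of coordinatewise minima of $d$.
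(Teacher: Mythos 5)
Your proof is correct and follows essentially the same route as the paper's: both arguments move one coordinate at a time, using the convexity from \Cref{pro:KL-conv} together with the vanishing of the first derivative on the diagonal $\{x=y\}$ to conclude monotonicity away from the diagonal (the paper chains through $d(p,y)$ rather than your $d(x,q)$, but this is an immaterial reordering). Your extra care about the boundary cases $p=0$ and $q=1$ is a harmless refinement the paper omits.
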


\begin{proof}
    First we have $\frac{\partial d(x,y)}{\partial x} \big|_{x=y} = 0$. By \Cref{pro:KL-conv}, $d(\cdot, y)$ achieves the minimum when $x = y$. So $d(p, y) \ge  d(x,y)$. Similarly $d(x,\cdot)$ achieves the minimum when $y=x$. Therefore $d(p,q) \ge d(p,y) \ge d(x,y)$.
\end{proof}

\begin{lemma} \label{lem:sum-ineq}
    $\forall x_1, x_2, \cdots, x_N \in [0,1]$ with average $a \defeq \frac{\sum_{i=1}^N x_i}{N} < b\in [0,1]$, then:
    \[
        \sum_{i:x_i < b}d(x_i, b) \ge N\cdot d(a,b).
    \]
\end{lemma}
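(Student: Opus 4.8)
The plan is to deduce the inequality from two applications of convexity of $x \mapsto d(x,b)$ (Proposition~\ref{pro:KL-conv}), together with the trivial observation that every index omitted from the left-hand sum has $x_i \ge b$.

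First I would clear away the degenerate cases. If $b=1$ then, since $a<1$, at least one $x_i$ is strictly below $1$ and contributes $d(x_i,1)=+\infty$, so there is nothing to prove; if $a=0$ then every $x_i=0$, the set $\{i : x_i<b\}$ is all of $[N]$, and both sides equal $N\,d(0,b)$. Hence I may assume $0<a<b<1$, on which range $x\mapsto d(x,b)$ is smooth and
\[
d'(a,b) \;:=\; \left.\frac{\partial}{\partial x} d(x,b)\right|_{x=a} \;=\; \log\frac{a(1-b)}{(1-a)b} \;<\;0,
\]
the sign coming from $a<b$.

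Write $S=\{i : x_i<b\}$, $s=|S|$, and $\Sigma_S=\sum_{i\in S} x_i$. The main step is the tangent-line inequality $d(x_i,b)\ge d(a,b)+d'(a,b)(x_i-a)$ for each $i\in S$; summing over $S$ gives $\sum_{i\in S} d(x_i,b) \ge s\,d(a,b) + d'(a,b)\,(\Sigma_S - sa)$. Since $x_i\ge b$ for every $i\notin S$, we have $Na-\Sigma_S=\sum_{i\notin S} x_i\ge (N-s)b$, hence $\Sigma_S-sa\le (N-s)(a-b)\le 0$; multiplying by the \emph{negative} number $d'(a,b)$ reverses this, so
\[
\sum_{i\in S} d(x_i,b) \;\ge\; s\,d(a,b) + (N-s)\,d'(a,b)(a-b).
\]
Finally I use convexity a second time: the tangent line to $d(\cdot,b)$ at $a$, evaluated at $b$, lies below $d(b,b)=0$, i.e.\ $0\ge d(a,b)+d'(a,b)(b-a)$, which rearranges to $d'(a,b)(a-b)\ge d(a,b)$. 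Substituting this (and $N-s\ge 0$) yields $\sum_{i\in S} d(x_i,b)\ge s\,d(a,b)+(N-s)\,d(a,b)=N\,d(a,b)$, as claimed.

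There is no serious obstacle here; the only points requiring care are handling the boundary cases $a=0$ and $b=1$ before invoking differentiability, and tracking the direction of the inequality when multiplying through by the negative quantity $d'(a,b)$. An alternative is to first apply Jensen to replace $\sum_{i\in S}d(x_i,b)$ by $s\,d(\Sigma_S/s,b)$ and then reason about a single variable, but the tangent-line argument above makes even that reduction unnecessary.
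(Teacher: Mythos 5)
Your proof is correct, and it reaches the conclusion by a genuinely different (though closely related) convexity argument than the paper's. The paper first applies Jensen's inequality to replace $\sum_{i\in S} d(x_i,b)$ by $s\cdot d\bigl(\Sigma_S/s,\,b\bigr)$, then uses the monotonicity of $d(\cdot,b)$ on $[0,b]$ (\Cref{pro:KL-ineq}) to push $\Sigma_S/s$ up to the worst-case value $\frac{Na-(N-s)b}{s}$, and finally writes $a$ as a convex combination of that point and $b$ to invoke convexity once more. You instead use the supporting-line form of convexity at the single point $a$: one application to each $x_i\in S$ absorbs both the Jensen step and the monotonicity step (the latter is encoded in the sign of $\partial_x d(x,b)\vert_{x=a}$, which is why tracking that sign when you multiply through is the crux of your argument), and a second application of the same tangent line at $x=b$ replaces the paper's convex-combination identity. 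The net effect is that you only need \Cref{pro:KL-conv} and not \Cref{pro:KL-ineq}, at the mild cost of having to dispose of the boundary cases $a=0$ and $b=1$ separately before differentiating --- which you do correctly ($b=1$ forces an infinite left-hand side, and $a=0$ forces all $x_i=0$ so both sides equal $N\,d(0,b)$). All the inequalities check out: $\Sigma_S - sa \le (N-s)(a-b)\le 0$ follows from $x_i\ge b$ off $S$, the reversal under multiplication by the negative derivative is handled properly, and $d'(a,b)(a-b)\ge d(a,b)$ is exactly the tangent line at $a$ evaluated at $b$ together with $d(b,b)=0$. Either route is fine; yours is marginally more self-contained.
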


\begin{proof}
    Let $S = \set{i: x_i < b}$, and $\abs{S} = s$. According to the definition,
    \[
        N\cdot a = \sum_{i\in S} x_i + \sum_{i\in [N]\backslash S}x_i \ge \sum_{i\in S} x_i + (N-s)\cdot b.
    \]
    By \Cref{pro:KL-ineq}, $\forall x \leq y \leq b$, $d(x,b) \ge d(y,b)$. By \Cref{pro:KL-conv}, $d(x,b)$ is convex for $x$. Using Jensen's inequality for $d(\cdot, b)$ and we have:
    \[
        \sum_{i\in S}d(x_i, b) \ge s\cdot d\tp{\frac{\sum_{i\in S} x_i}{s}, b} \ge s\cdot d\tp{\frac{N\cdot a - (N-s)\cdot b}{s}, b}.
    \]
    Note that
    \[
        \frac{s}{N}\cdot \frac{N\cdot a - (N-s)\cdot b}{s} + \frac{N-s}{N}\cdot b = a.
    \]
    Using the convexity of $d(\cdot, b)$ again, we have
    \[
        \frac{s}{N}\cdot d\tp{\frac{N\cdot a - (N-s)\cdot b}{s}, b} + \frac{N-s}{N}\cdot d(b,b)
        = \frac{s}{N}\cdot d\tp{\frac{N\cdot a - (N-s)\cdot b}{s}, b} \ge d(a,b),
    \]
    which deduces that $\sum_{i:x_i < b}d(x_i, b) \ge N\cdot d(a,b)$.
\end{proof}

\begin{lemma} \label{lem:KL-first-item-ineq}
    $\forall 0<q<p<1$, let $r \defeq \frac{p-q}{q}$, then 
    \[
       d(p,q) \ge \frac{r}{2+2r}\cdot p\log \frac{p}{q}.
    \]
\end{lemma}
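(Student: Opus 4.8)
The plan is to reduce the claimed inequality to a single elementary one-variable inequality in $r$. First I would substitute $p = q(1+r)$ (which is merely the definition of $r$) and expand $d(p,q) = p\log\frac{p}{q} + (1-p)\log\frac{1-p}{1-q}$. Using $1 - \frac{r}{2+2r} = \frac{2+r}{2+2r}$ and $\log\frac{1-p}{1-q} = -\log\frac{1-q}{1-p}$, the target bound $d(p,q) \ge \frac{r}{2+2r}\,p\log\frac{p}{q}$ becomes equivalent to
\[
\frac{2+r}{2+2r}\,p\log\frac{p}{q} \;\ge\; (1-p)\log\frac{1-q}{1-p},
\]
whose right-hand side is nonnegative since $q < p$.

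Next I would bound the right-hand side from above using $\log x \le x-1$: as $\frac{1-q}{1-p} > 1$,
\[
(1-p)\log\frac{1-q}{1-p} \;\le\; (1-p)\left(\frac{1-q}{1-p} - 1\right) \;=\; p - q \;=\; qr.
\]
For the left-hand side, $\log\frac{p}{q} = \log(1+r)$ and $p = q(1+r)$ give $\frac{2+r}{2+2r}\,p\log\frac{p}{q} = \frac{q(2+r)}{2}\log(1+r)$. Dividing through by $q > 0$, it therefore suffices to establish the scalar inequality
\[
\log(1+r) \;\ge\; \frac{2r}{2+r}, \qquad r > 0.
\]

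Finally, this last inequality follows from a routine derivative computation: for $g(r) = \log(1+r) - \frac{2r}{2+r}$ one has $g(0) = 0$ and
\[
g'(r) = \frac{1}{1+r} - \frac{4}{(2+r)^2} = \frac{r^2}{(1+r)(2+r)^2} \ge 0,
\]
so $g \ge 0$ on $(0,\infty)$, which closes the argument. I expect the only mildly delicate point to be the algebraic reduction in the first step — in particular tracking the sign correctly when the term $(1-p)\log\frac{1-p}{1-q}$ is negative — since once the problem is reduced to $\log(1+r) \ge \frac{2r}{2+r}$ there is no real obstacle.
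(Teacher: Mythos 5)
Your proof is correct and follows essentially the same route as the paper's: both arguments reduce the claim to the two facts $\log(1+r)\ge\frac{2r}{2+r}$ for $r>0$ and $(1-p)\log\frac{1-p}{1-q}\ge -(p-q)$. The only differences are cosmetic --- you rearrange the target inequality rather than splitting $p\log\frac{p}{q}$ into a $\frac{r}{2+2r}$-piece and a $\frac{2+r}{2+2r}$-piece, and you prove the two ingredients from scratch (via $\log x\le x-1$ and a derivative computation) where the paper cites logarithm inequalities of Tops\o e.
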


\begin{proof}
    We first give some inequalities from \cite{Top07},
    \begin{enumerate}
        \item $\log\tp{1+x} \ge \frac{2x}{2+x}, \forall x>0.$ \label{eqn:ineq-1}
        \item $\log\tp{1+x} \ge \frac{x(2+x)}{2(1+x)}, \forall -1<x\leq 0.$ \label{eqn:ineq-2}
    \end{enumerate}

    Applying \cref{eqn:ineq-1} and we could get
    \begin{align*}
        p\log \frac{p}{q} = p\log\tp{1 + \frac{p-q}{q}} \ge p\cdot \frac{2(p-q)}{p+q} = (p-q)\cdot\tp{1 + \frac{p-q}{p-q+2q}} =(p-q)\tp{1+\frac{r}{2+r}}.
    \end{align*}

    Applying \cref{eqn:ineq-2} and we could get
    \begin{align*}
        (1-p)\log \frac{1-p}{1-q} = (1-p)\log\tp{1 + \frac{q-p}{1-q}} \ge  \frac{(q-p)(2-q-p)}{2(1-q)} \ge -(p-q).
    \end{align*}

    Then we could bound $d(p,q)$ as follows,
    \begin{align*}
        d(p,q) = p\log \frac{p}{q} + (1-p)\log \frac{1-p}{1-q} &= \frac{r}{2+2r}\cdot p\log \frac{p}{q} + \frac{2+r}{2+2r}\cdot p\log \frac{p}{q} + (1-p)\log\tp{\frac{1-p}{1-q}} \\
        &\ge \frac{r}{2+2r} \cdot p\log \frac{p}{q} + \frac{2+r}{2+2r}\cdot (p-q)\tp{1+\frac{r}{2+r}} - (p-q) \\
        &= \frac{r}{2+2r} \cdot p\log \frac{p}{q} 
    \end{align*}
    
\end{proof}

Now we give an important lemma which is used to bound the exploring times in order to extinguish two instances.

\begin{lemma}[\cite{KCG16}] \label{lem:KL-pull-lb}
    For any two MAB instances $\rho = \tp{\mu_1,\cdots, \mu_k}, \rho' = \tp{\mu_1', \cdots, \mu_k'}$ with $k$ arms, and for any algorithm with almost-surely finite stopping time $T$, and event $\tau \in \+F_T$,
    \[
        \sum_{i=1}^k \tp{\E[\rho]{T_i}\cdot d(\mu_i, \mu_i')} \ge d\tp{\Pr[\rho]{\tau}, \Pr[\rho']{\tau} }.
    \]
\end{lemma}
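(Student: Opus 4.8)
The plan is to prove this transportation inequality by the standard change-of-measure argument: express the Kullback--Leibler divergence between the two laws of the \emph{entire} interaction history as an expected log-likelihood ratio, evaluate that expectation by a Wald-type identity, and then contract it down to the single binary statistic $\1{\tau}$ using the data-processing inequality. First I would fix the (possibly randomized) algorithm and let $P_\rho$ and $P_{\rho'}$ denote the probability measures induced on the action--reward history $(A_1,r_1,\dots,A_T,r_T)$ when it runs on $\rho$ and $\rho'$ respectively, both restricted to the stopped $\sigma$-algebra $\+F_T$. Since the conditional law of each action $A_t$ given the past is determined solely by the algorithm and is therefore identical under both instances, all action-selection factors cancel in the Radon--Nikodym derivative, and the log-likelihood ratio reduces to a sum of reward contributions,
\[
\ell_T \defeq \log\frac{\mathrm{d}P_\rho}{\mathrm{d}P_{\rho'}} = \sum_{t=1}^T \log\frac{p_{\mu_{A_t}}(r_t)}{p_{\mu'_{A_t}}(r_t)},
\]
where $p_\mu$ is the density of the reward distribution with mean $\mu$ (here $\Ber{\mu}$). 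Mutual absolute continuity holds because all means lie strictly in $(0,1)$, so $\ell_T$ is well defined.

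Second, I would compute $\E[\rho]{\ell_T}$. Conditioning on the past and on the event $A_t=i$, the reward $r_t$ is drawn from the distribution with mean $\mu_i$, and the conditional expectation of its log-likelihood contribution equals exactly $d(\mu_i,\mu_i')$ by the definition of the KL divergence. Grouping the rounds by the pulled arm and invoking an optional-stopping (Wald) argument, valid because $T$ is almost surely finite, yields
\[
\E[\rho]{\ell_T} = \sum_{i=1}^k d(\mu_i,\mu_i')\,\E[\rho]{T_i},
\]
which is the left-hand side of the claim. By definition this quantity is also the KL divergence $\mathrm{KL}\tp{P_\rho \,\|\, P_{\rho'}}$ between the two history laws.

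Finally, for the right-hand side I would apply the data-processing inequality to the $\+F_T$-measurable map $\omega \mapsto \1{\tau}$. Its pushforwards under $P_\rho$ and $P_{\rho'}$ are the Bernoulli laws $\Ber{\Pr[\rho]{\tau}}$ and $\Ber{\Pr[\rho']{\tau}}$, whose KL divergence is precisely $d\tp{\Pr[\rho]{\tau},\Pr[\rho']{\tau}}$. Since KL divergence cannot increase under a deterministic measurable map, $\mathrm{KL}\tp{P_\rho \,\|\, P_{\rho'}} \ge d\tp{\Pr[\rho]{\tau},\Pr[\rho']{\tau}}$, and combining this with the identity of the previous paragraph gives the lemma.

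The main obstacle is the measure-theoretic bookkeeping around the random stopping time: one must define the likelihood ratio on the stopped $\sigma$-algebra $\+F_T$ and justify the Wald identity (that stopping does not bias the per-round martingale increments), as well as the applicability of the data-processing inequality when $T$ is a stopping time rather than a fixed horizon. The cleaner route, which I would follow, is to establish both the identity $\E[\rho]{\ell_T}=\sum_i d(\mu_i,\mu_i')\E[\rho]{T_i}$ and the contraction step first for a fixed deterministic horizon, and then pass to the almost-surely-finite stopping time $T$ by a truncation argument, using the finiteness of $T$ to control the limit.
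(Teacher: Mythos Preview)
The paper does not supply its own proof of this lemma: it is quoted verbatim from \cite{KCG16} and used as a black box in the appendix. Your proposal reproduces exactly the standard argument of that reference (change of measure on the full history, Wald-type decomposition of $\E[\rho]{\ell_T}$ into $\sum_i d(\mu_i,\mu_i')\E[\rho]{T_i}$, then the data-processing inequality applied to the indicator $\1{\tau}$), so it is correct and there is nothing to compare against in the present paper.
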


\subsection{Proof of \texorpdfstring{\Cref{lem:bar-lb-app}}{}}
Let instances $\@I_0 = \set{\rho_i, i\in[k]}$ containing $k$ arms, where 
$$
    \rho_1 = \tp{\frac{1}{2}+ n\eps, \frac{1}{2}+(n-1)\eps, \cdots, \frac{1}{2}+(n-1)\eps}
$$ for $ n \ge 3$. For $1<i\leq k$, $\rho_i$ differs from $\rho_1$ only in the $i$-th arm: $\rho_i(i) = \frac{1}{2}+(n+1)\eps$. Then, we have the following lemma.

\begin{lemma}[\Cref{lem:bar-lb} restated] \label{lem:bar-lb-app}
    For any $(\eps,\delta)$-PAC algorithm $\+A$ which addresses the BAR problem on instances in $\@I_0$  such that $\eps \leq \frac{1}{8(n-1)}$ and $\delta \leq \frac{k-m}{k} \tp{1-\beta}$, where $\beta \in (0,1)$ is an arbitrary constant, its sample complexity $T$ on the input $\rho_1$ satisfies 
    $$
    \E[\rho_1]{T} \ge \frac{\beta}{32}\cdot \frac{k-m-\delta}{\eps^2}\log{\frac{k-m-\delta}{(k-1)\delta}}.
    $$
\end{lemma}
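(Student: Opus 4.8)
\textbf{Proof proposal for \Cref{lem:bar-lb-app}.} The plan is to use the standard change-of-measure / KL-divergence technique for proving sample-complexity lower bounds, applied to a carefully chosen "bad" event. The key observation is that a $(\eps,\delta)$-PAC algorithm for BAR on $\@I_0$ must, with probability at least $1-\delta$, retain an $\eps$-best arm among the $m-1$ (wait, $m$ here --- the BAR problem retains $m$ arms out of $k$; I will follow the statement's convention) arms it keeps. On instance $\rho_1$, arm $1$ is the unique $\eps$-best arm among the first $k$ arms: every other arm has mean $\tfrac12+(n-1)\eps$, which is $2\eps$ below $\mu_1=\tfrac12+n\eps$, so being $\eps$-best forces the retained set to contain arm $1$. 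Hence on $\rho_1$ the event $\tau \defeq$ "arm $1$ is \emph{not} retained" has $\Pr[\rho_1]{\tau}\le\delta$.

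First I would set up the comparison instances. For each $i\in\{2,\dots,k\}$, instance $\rho_i$ raises the $i$-th arm to mean $\tfrac12+(n+1)\eps$, making arm $i$ the unique best and the unique $\eps$-best arm (it is $2\eps$ above arm $1$). So on $\rho_i$, a $(\eps,\delta)$-PAC algorithm must retain arm $i$ with probability $\ge 1-\delta$; in particular, since the retained set has size $m$ out of $k$ arms and must contain arm $i$, the chance it also contains arm $1$ is at most $\tfrac{m-1}{k-1}\cdot$(probability mass)$+\dots$; more carefully, I would argue that $\Pr[\rho_i]{\tau}$, the probability arm $1$ is dropped on $\rho_i$, is \emph{large}: arm $1$ is a "bad" arm on $\rho_i$, and among the $k$ arms only $m$ survive, so a PAC-correct run retains arm $i$ and has only $m-1$ further slots for the remaining $k-1$ arms; averaging over $i$, the expected number of $i$ for which arm $1$ is retained is at most $\tfrac{(m-1)(\text{something})}{\cdot}$, leading to $\frac1{k-1}\sum_{i=2}^k \Pr[\rho_i]{\tau}\ge 1-\delta-\tfrac{m-1}{k-1}=\tfrac{k-m-\delta(k-1)}{k-1}$ --- I would double-check the exact bookkeeping here against the computation in \cite{CHZ24}, but the upshot is that for an average $i$, $\Pr[\rho_i]{\tau}$ is bounded below by a quantity of order $\tfrac{k-m}{k}$, comfortably above $\delta$ by the hypothesis $\delta\le\tfrac{k-m}{k}(1-\beta)$, with a $\beta$-fraction of slack.

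Next, for each $i$ I would apply \Cref{lem:KL-pull-lb} with the pair $(\rho_1,\rho_i)$ and the event $\tau$. Since $\rho_1$ and $\rho_i$ differ only in the $i$-th coordinate, the left-hand side collapses to $\E[\rho_1]{T_i}\cdot d\!\left(\tfrac12+(n-1)\eps,\ \tfrac12+(n+1)\eps\right)$, and by \Cref{pro:KL-taylor} (applicable since the means lie in $[\tfrac12,\tfrac58]$ and the gap $2\eps\le\tfrac14\cdot\tfrac{1}{n-1}\le\tfrac18$ when $n\ge3$) this KL is at most $4(2\eps)^2=16\eps^2$. The right-hand side is $d(\Pr[\rho_1]{\tau},\Pr[\rho_i]{\tau})\ge d(\delta,\Pr[\rho_i]{\tau})$ using monotonicity of $d(\cdot,q)$ on $[0,q]$ (\Cref{pro:KL-ineq}), and then I would lower-bound $d(\delta,p_i)$ for $p_i\defeq\Pr[\rho_i]{\tau}$ via \Cref{lem:KL-first-item-ineq} with $(p,q)=(p_i,\delta)$ --- giving $d(p_i,\delta)\ge c\cdot p_i\log\tfrac{p_i}{\delta}$ for an explicit constant coming from the ratio $r=(p_i-\delta)/\delta$, which the hypothesis on $\delta$ keeps bounded away from the degenerate regime. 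Summing over $i=2,\dots,k$ and using $\sum_i \E[\rho_1]{T_i}\le\E[\rho_1]{T}$, I get $16\eps^2\,\E[\rho_1]{T}\ge \sum_{i=2}^k d(\delta,p_i)$; then Jensen (via \Cref{lem:sum-ineq}, applied to convexity of $p\mapsto p\log\tfrac p\delta$ or of $d$) pushes the sum down to $(k-1)$ times the value at the average $\bar p=\tfrac1{k-1}\sum p_i\ge\tfrac{k-m-\delta}{k-1}$ roughly, yielding $\E[\rho_1]{T}\ge\tfrac{1}{16\eps^2}(k-m-\delta)\log\tfrac{k-m-\delta}{(k-1)\delta}$ up to the constant factor $\tfrac{\beta}{2}$ or so that absorbs the $r/(2+2r)$ term and the $\beta$-slack. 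Tuning constants gives the claimed $\tfrac{\beta}{32}$.

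\textbf{Main obstacle.} The genuinely delicate part is the combinatorial averaging that turns PAC-correctness on each $\rho_i$ into a uniform lower bound on $p_i=\Pr[\rho_i]{\tau}$ for a typical $i$: one must correctly account for the fact that the algorithm retains a \emph{set} of $m$ arms (not a single arm), so "retains the $\eps$-best arm $i$" only forbids $m-1$ of the remaining $k-1$ arms from all being retained simultaneously, and I need to extract from this that arm $1$ specifically is dropped with decent probability, on average over $i$. Getting the constants right here --- and making sure the slack parameter $\beta$ and the condition $\delta\le\tfrac{k-m}{k}(1-\beta)$ are exactly what is needed to keep $d(\delta,p_i)$ bounded below by a positive multiple of $\log\tfrac{p_i}{\delta}$ --- is where I expect to spend most of the effort; the rest is routine KL manipulation already packaged in the technical lemmas above. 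I would also double-check the regime conditions ($\eps\le\tfrac1{8(n-1)}$ ensures the KL Taylor bound and that the means stay in $[\tfrac12,\tfrac58]$) so that every cited proposition genuinely applies.
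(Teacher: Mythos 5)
Your overall scaffolding (change of measure via \Cref{lem:KL-pull-lb}, the $16\eps^2$ bound from \Cref{pro:KL-taylor}, the Jensen step via \Cref{lem:sum-ineq}, and \Cref{lem:KL-first-item-ineq} at the end) matches the paper, but the combinatorial step you yourself flag as delicate is not merely delicate --- it is wrong, and it is the crux. You apply the change-of-measure inequality to the single fixed event $\tau =$ ``$\arm_1$ is not retained'' and try to show that $\frac{1}{k-1}\sum_{i=2}^k\Pr[\rho_i]{\tau}$ is large. No such bound holds: these probabilities live under $k-1$ \emph{different} measures, so there is no counting identity linking them. Concretely, consider a PAC algorithm that hard-codes ``always retain the first arm'' and spends its samples running best-arm identification on $\arm_2,\dots,\arm_k$ to fill the remaining $m-1$ slots. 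It satisfies the $(\eps,\delta)$-PAC guarantee on every $\rho_i$ (it retains both $\arm_1$ and, with probability $\ge 1-\delta$, $\arm_i$), yet $\Pr[\rho_i]{\tau}=0$ for every $i$, so your proposed inequality $\frac{1}{k-1}\sum_{i=2}^k\Pr[\rho_i]{\tau}\ge 1-\delta-\frac{m-1}{k-1}>0$ is false, and worse, $d\tp{\Pr[\rho_1]{\tau},\Pr[\rho_i]{\tau}}=d(0,0)=0$, so your chain of inequalities gives only $16\eps^2\,\E[\rho_1]{T_i}\ge 0$. (This algorithm does in fact use many samples, but your argument cannot show it.)

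The fix, which is what the paper does, is to use a \emph{family} of events $\tau_i=$ ``$\arm_i$ is retained,'' one per comparison instance. Under the single measure $\Pr[\rho_1]{\cdot}$ the pointwise identity $\sum_{i=1}^k\1{\tau_i}=m$ yields $\sum_{i=2}^k\Pr[\rho_1]{\tau_i}\le m-(1-\delta)$ (using PAC-correctness on $\rho_1$ to get $\Pr[\rho_1]{\tau_1}\ge 1-\delta$), so the quantities $\Pr[\rho_1]{\tau_i}$ are small \emph{on average over $i$}; meanwhile PAC-correctness on $\rho_i$ gives $\Pr[\rho_i]{\tau_i}\ge 1-\delta$ for every $i$. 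It is exactly this asymmetry --- a counting argument under the one measure $\rho_1$, paired with a per-instance PAC guarantee under each $\rho_i$ --- that makes $\sum_{i=2}^k d\tp{\Pr[\rho_1]{\tau_i},\Pr[\rho_i]{\tau_i}}$ large after the Jensen step, and it is precisely what your single-event formulation cannot reproduce. Once the events are chosen correctly, the remaining KL manipulations proceed essentially as you outlined.
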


Here, we only take into account algorithms that have an almost-surely finite stopping time. If this condition is not met, the sample complexity becomes infinite, and the lemma is trivially satisfied. We will give an adaptation of the proof in \cite{CHZ24} in order to match $\@I_0$, which is slightly different from the instances in \cite{CHZ24}. The intuition of the proof is that if the algorithm retains $\arm_i$ with a higher probability in $\rho_i$ than in $\rho_1$, then this algorithm should pull $\arm_i$ enough times; otherwise, it cannot distinguish these two instances well. 

Let $\tau_i$ denote the event that ``$\+A$ retains $\arm_i$''. Then, for any algorithm $\+A$, $m = \E[\rho_1]{\sum_{i=1}^k \1{\tau_i}} = \sum_{i=1}^k \Pr[\rho_1]{\tau_i}$. Thus we have $\sum_{i:2\leq i\leq k}\frac{\Pr[\rho_1]{\tau_i}}{k-1} \leq \frac{m-(1-\delta)}{k-1} \leq 1-\delta$ due to the fact that $\+A$ is $(\eps,\delta)$-PAC and $\delta \leq \frac{k-m}{k}$. 

Therefore for any $2\leq i\leq k$, apply \Cref{lem:KL-pull-lb} to $\rho_1$ and $\rho_i$ with $\tau_i$,
\begin{equation} \label{eqn:i-lb}
    \E[\rho_1]{T_i}\cdot d\tp{\frac{1}{2} + (n-1)\eps, \frac{1}{2} + (n+1)\eps} \ge d\tp{ \Pr[\rho_1]{\tau_i}, \Pr[\rho_i]{\tau_i} }.
\end{equation}

Since $\frac{1}{2} \leq \frac{1}{2}+(n-1)\eps \leq \frac{5}{8}$ and $2\eps \leq \frac{1}{8}$, we apply \Cref{pro:KL-taylor} and obtain that \[d\tp{\frac{1}{2} + (n-1)\eps, \frac{1}{2} + (n+1)\eps} \allowbreak \leq 16\eps^2.\]
Now we sum up all $i$ of \cref{eqn:i-lb},
\begin{align*}
    \E[\rho_1]{T} 
    &\ge \frac{1}{16\eps^2} \sum_{2\leq i\leq k} d\tp{\Pr[\rho_1]{\tau_i}, \Pr[\rho_i]{\tau_i}} \\
    \mr{\Cref{pro:KL-ineq}}
    &\ge \frac{1}{16\eps^2} \sum_{2\leq i\leq k} d\tp{\Pr[\rho_1]{\tau_i}, 1-\delta} \\ 
    \mr{\Cref{lem:sum-ineq}}
    &\ge \frac{(k-1)}{16\eps^2}\cdot d\tp{\frac{m-(1-\delta)}{k-1}, 1-\delta}, 
\end{align*}
Let $\delta = \frac{k-m}{k}(1-\alpha)$, while $\beta \leq \alpha < 1$. And $r \defeq \frac{1 + \frac{\alpha}{k-1} - (1-\alpha) }{1-\alpha} = \frac{k\alpha}{(k-1)(1-\alpha)} > \frac{\alpha}{1-\alpha}$, then 
\begin{align*}
    d\tp{\frac{m-(1-\delta)}{k-1}, 1-\delta} 
    &= d\tp{1 - \frac{m-(1-\delta)}{k-1}, \delta} = d\tp{\frac{k-m-\delta}{k-1}, \delta} \\
    &= d\tp{\frac{k-m}{k}(1 + \frac{\alpha}{k-1}), \frac{k-m}{k}(1-\alpha)} \\
    \mr{\Cref{lem:KL-first-item-ineq}}
    &\ge \frac{r}{2+2r} \cdot \frac{k-m-\delta}{k-1} \log  \frac{k-m-\delta}{(k-1)\delta} \\
    &\ge \frac{\frac{\alpha}{1-\alpha}}{2+\frac{2\alpha}{1-\alpha}}  \cdot \frac{k-m-\delta}{k-1} \log  \frac{k-m-\delta}{(k-1)\delta} \\
    &\ge \frac{\beta}{2}\cdot \frac{k-m-\delta}{k-1} \log  \frac{k-m-\delta}{(k-1)\delta},
\end{align*}
Bring this item back and we could get
\[
    \E[\rho_1]{T} \ge \frac{\beta}{32}\cdot \frac{k-m-\delta}{\eps^2}\log{\frac{k-m-\delta}{(k-1)\delta}}. 
\]

\end{document}